\documentclass{article}

\usepackage{microtype}
\usepackage{graphicx}
\usepackage{booktabs} 
\usepackage{wrapfig}
\usepackage{amssymb}
\usepackage{mathtools}
\usepackage{multirow, multicol}
\usepackage{tcolorbox}
\usepackage{amsmath}
\usepackage{amssymb}
\usepackage{mathtools}
\usepackage{amsthm}
\usepackage[table]{xcolor}
\usepackage[T1]{fontenc}

\definecolor{myblue}{HTML}{000089}
\definecolor{myred}{HTML}{FF0000}

\usepackage{hyperref}

\usepackage[preprint]{icml2026}

\usepackage[capitalize,noabbrev]{cleveref}

\theoremstyle{plain}
\newtheorem{theorem}{Theorem}[section]
\newtheorem{proposition}[theorem]{Proposition}

\theoremstyle{definition}

\theoremstyle{remark}

\usepackage[textsize=tiny]{todonotes}

\icmltitlerunning{Submission and Formatting Instructions for ICML 2026}

\begin{document}

\twocolumn[
  \icmltitle{On the Necessity of Attention--FFN Split in Vision Transformers}



  \icmlsetsymbol{equal}{*}

\begin{icmlauthorlist}
  \icmlauthor{Junhyeok Kim}{yyy}
  \icmlauthor{Jinyeong Kim}{yyy}
  \icmlauthor{Jae Wan Park}{yyy}
  \icmlauthor{Seong Jae Hwang}{yyy}
\end{icmlauthorlist}

\icmlaffiliation{yyy}{Department of Artificial Intelligence, Yonsei University, Seoul, Republic of Korea}

  \icmlcorrespondingauthor{Seong Jae Hwang}{seongjae@yonsei.ac.kr}

  \icmlkeywords{Inductive Bias, Vision Models}

  \vskip 0.3in
]



\printAffiliationsAndNotice{}  

\begin{abstract}
The standard Transformer architecture relies on a rigid pattern that alternates Attention and Feed-Forward Network (FFN) layers. Despite its widespread adoption, the inductive bias imposed by this strict separation has not been systematically examined. In this work, we investigate the necessity of the Attention--FFN dichotomy in Vision Transformers (ViTs). To facilitate this analysis, we introduce the AttenFeed module, a unified component that integrates the functional properties of both Attention and FFN. Based on this module, we devise the unified Vision Transformer (uViT), which replaces the conventional alternating Attention--FFN structure with a sequence of AttenFeed modules. We then use uViT as a control group that relaxes the Attention--FFN dichotomy of the standard ViT and systematically compare the two models across multiple datasets and model scales. Our experiments reveal that the Attention--FFN dichotomy can hinder performance at smaller model scales due to the rigid parameter allocation of ViTs. The AttenFeed module and uViT serve as new analytical tools for understanding the Attention--FFN structure and offer theoretical insights into the heuristically designed architecture of conventional ViTs.
\end{abstract}

\section{Introduction}
\label{sec:motivation}

The Transformer architecture~\cite{vaswani} has demonstrated exceptional performance across diverse domains, establishing itself as a universal architecture~\cite{bert, gpt, vit, clip, dit, wav2vec, alphafold}. Although originally developed for machine translation, it is highly encouraging that the Transformer exhibits robust performance even in disparate fields. In particular, Vision Transformer (ViT)~\cite{vit} has achieved results surpassing those of Convolutional Neural Networks (CNNs)~\cite{lenet, alexnet, resnet}, which had long served as the de facto standard in the computer vision domain.

This success of ViT is largely attributed to the attention mechanism. Unlike CNNs, which structurally enforce a locality inductive bias~\cite{cnn-bias1, cnn-bias2, deeplearning}, attention-based models are required to learn this property directly from data.
While this absence of structural constraints poses an additional learning challenge, the lack of bias grants the model the freedom to learn superior mechanisms from scratch.
Consequently, ViTs not only outperform CNNs in capturing long-range dependencies~\cite{convit, variablecreation} but also exhibit advantageous emergent properties such as register tokens~\cite{regtoken} that are absent in traditional CNNs.

As inductive bias has played a pivotal role in the advancement of Transformers, some research have been dedicated to understanding the biases intrinsic to the architecture. Briefly, it has been observed that Transformers exhibit a structural bias toward symmetric functions~\cite{transformer-bias1} and tend to concentrate attention on a sparse subset of tokens~\cite{transformer-bias2}.
Despite these findings, the inductive bias imposed by the most fundamental structural choice of the Transformer, alternation of Attention and Feed-Forward Network (FFN), remains largely unexplored.

Moreover, recent studies on Mixture-of-Experts (MoE) provide evidence that this ``baked-in'' alternating structure of Attention and FFN may not be indispensable~\cite{diffskip, skipgpt, router-tuning, whatmatters}. For instance, SkipGPT~\cite{skipgpt} trained a router to conditionally drop attention or FFN blocks, revealing that consecutive interactions (e.g., Attention-Attention or FFN-FFN) are occasionally more effective than the standard alternating pattern. These studies motivate an investigation into when the Attention--FFN structure in Transformers is necessary and when it may be redundant.

Given these observations, understanding the inductive bias imposed by the repeating Attention--FFN structure of the Transformer becomes crucial. Therefore, we aim to analyze this ``Attention--FFN dichotomy'' in ViT. To achieve this, (i) we formulate the AttenFeed module, a single component capable of exhibiting the properties of both Attention and FFN. (ii) Then we build the unified Vision Transformer (uViT) composed of a sequence of AttenFeed modules, thereby removing the strict Attention--FFN dichotomy. (iii) Lastly, we compare uViT with ViT to analyze under what circumstances and to what extent the conventional Attention--FFN dichotomy facilitates the learning process.

To formulate the AttenFeed module, we start from the interpretation of UMoE~\cite{umoe} that views the standard Attention as an FFN with no activation function. By incorporating a non-linear activation function into standard attention, we enable its interpretation as a conventional FFN. In \cref{exp:math_description}, we mathematically demonstrate that the AttenFeed module is a generalized form of both Attention and FFN. Then, in \cref{exp:attention,exp:ffn,exp:weight_comparison}, we experimentally show that the AttenFeed module simultaneously possesses the properties of both Attention and FFN. These results confirm that AttenFeed relaxes the Attention--FFN dichotomy, subsuming the functionalities of both.

\begin{figure}[t]
  \centering
  \includegraphics[width=\columnwidth]{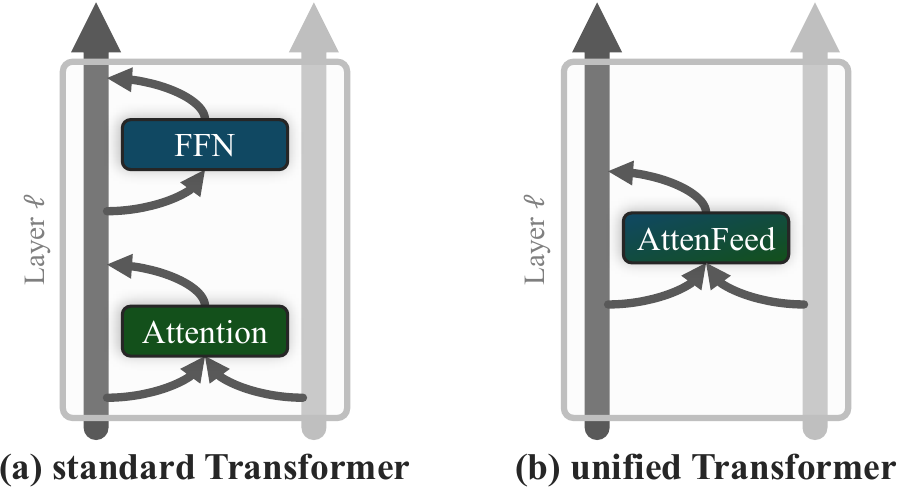}
  \caption{\textbf{Architecture illustrations}. (a) shows a layer of the standard Transformer, and (b) shows a layer of the unified Transformer, a control architecture for studying the Attention--FFN dichotomy.}
  \label{fig:teaser}
  \vspace{-10pt}
\end{figure}

Next, we design uViT, an architecture composed of a sequence of AttenFeed modules (Figure~\ref{fig:teaser}b). Since the AttenFeed module relaxes the structural inductive bias imposed by the Attention--FFN separation in conventional ViTs, we hypothesize that uViT exhibits weaker inductive bias than standard ViTs. We empirically validate this conjecture in \cref{exp:transfer}. Consequently, uViT serves as a control group for investigating the Attention--FFN dichotomy of ViTs, as it subsumes the functional properties of both Attention and FFN while relaxing the strict structural constraints.

Finally, we pretrain both ViT and uViT on various datasets and compare their performance in \cref{exp:scaling}. The results show that while uViT significantly outperforms ViT at smaller model scales, the performance gap between the two architectures narrows and becomes marginal as the model capacity increases. The experiments in \cref{exp:inductive} confirm that the inductive bias imposed by the Attention--FFN dichotomy in standard ViTs actually hinders performance at smaller scales.

The significance of our proposed AttenFeed module extends beyond the mere integration of standard Attention and FFN. Rather, it aims to open a new avenue for theoretically and empirically investigating the heuristically established alternating Attention--FFN structure of Transformers~\cite{attentiononly, empirical}. We anticipate that the AttenFeed module and uViT will facilitate a more systematic investigation into the architectural inductive biases that have long underpinned the success of ViT.

Our main contributions are summarized as follows:
\begin{itemize}
    \item[$\circ$] \textbf{Formulation of the AttenFeed Module.} We propose the AttenFeed module, a unified component mathematically and empirically proven to generalize the properties of both standard Attention and FFNs.
    \item[$\circ$] \textbf{Introduction of uViT.} We design the unified Vision Transformer (uViT) composed entirely of AttenFeed modules, effectively removing the strict Attention--FFN dichotomy of standard Transformers.
    \item[$\circ$] \textbf{Analysis of Inductive Bias at Scale.} Through experiments using uViT, we reveal that the conventional Attention--FFN dichotomy acts as a restrictive inductive bias that impedes effective learning at smaller model scales.
\end{itemize}

\section{Related Work}
\label{sec:relwork}

Inductive bias serves as a fundamental architectural constraint that guides a model toward a specific solution space. While these biases often accelerate early-stage training, they can arguably act as a bottleneck, limiting the performance ceiling when data becomes sufficiently large~\cite{scalingmlps, vit, generalization, bitterlesson}. For instance, the Vision Transformer (ViT)~\cite{vit} relaxes the image-specific inductive biases inherent in Convolutional Neural Networks (CNNs), such as translation equivariance and locality~\cite{cnn-bias1, cnn-bias2, deeplearning}.

As inductive bias has emerged as a key concept in understanding Transformer architectures, a growing body of literature has sought to characterize the specific biases that underpin Transformer performance. Transformers exhibit a structural bias toward symmetric functions~\cite{transformer-bias1} and tend to concentrate attention on a sparse subset of tokens~\cite{transformer-bias2}. From a task-specific perspective, the pairwise computation of self-attention inherently favors tasks requiring relational reasoning~\cite{transformer-bias3}. Furthermore, compared to State-Space Models (SSMs)~\cite{gu2024mamba}, the structural bias of Transformers makes them more suitable for the copy task~\cite{transformer-bias4}.

Building upon these insights, subsequent research has focused on actively manipulating these biases to enhance training efficiency and performance~\cite{vitae, earlyconv, convit}. For instance, Swin Transformer~\cite{swintransformer} reintroduces locality and translation invariance through a hierarchical architecture and shifted window attention mechanism. Conversely, MLP-Mixer~\cite{mlpmixer} and ResMLP~\cite{resmlp} minimize the inductive bias of token mixing by replacing self-attention with vanilla MLPs, demonstrating superior performance over standard Transformers on larger datasets. Beyond structural modifications, several studies~\cite{spatialloss, ditinductive} propose methods for injecting inductive biases during the training phase to enable more efficient training.

Despite these advancements, to the best of our knowledge, there is no research exploring the necessity of the rigid functional separation between Attention and FFN. While GAU~\cite{gau} attempts to merge GLU and Attention, it was primarily designed to compensate for the performance limitations of linear attention rather than to minimize inductive bias. We propose a unified module that generalizes both components, thereby enabling, for the first time, a systematic investigation of the Attention--FFN dichotomy.
\section{Method}
The primary objective of this paper is to investigate when the inductive bias imposed by Attention--FFN structure becomes beneficial. To facilitate this investigation, we devise an architecture designed to relax this structural separation. Specifically, in \cref{sec:prelim}, we briefly review the conventional Vision Transformer (ViT) block. Next, in \cref{sec:method_af}, we introduce the AttenFeed module, a component that conceptually unifies Attention and FFN. Finally, in \cref{sec:method_uvit}, we present the overall unified Vision Transformer (uViT) architecture, designed to relax the explicit decoupling of Attention and FFN in ViT.

\subsection{Preliminary}
\label{sec:prelim}

A conventional ViT takes a \texttt{[CLS]} token and {\small $T$} image patch tokens as input. Let {\small $\mathbf{X}^{\ell} \in \mathbb{R}^{(1+T) \times d}$} denote the input token sequence for layer {\small $\ell \in \{1,\ldots,L\}$}, and {\small $\mathbf{x}_i^{\ell} \in \mathbb{R}^d$} represent the $i^{\text{th}}$ token in the sequence. A standard ViT block (Figure~\ref{fig:teaser}a) comprises an alternating structure of multi-head Attention and FFNs, formulated as follows~\cite{vit, primer}:
\begin{equation}
{\small
    \begin{aligned}
        \mathbf{x}_i^{\text{mid}, \ell} &= \mathbf{x}_i^{\ell-1} + \sum_h \text{Attn}^{\ell,h}(\text{LN}_1^\ell(\mathbf{X}^{\ell-1}))_i, \\
        \mathbf{x}_i^{\ell} &= \mathbf{x}_i^{\text{mid}, \ell} + \text{FFN}^\ell (\text{LN}_2^\ell(\mathbf{x}_i^{\text{mid}, \ell})),
    \end{aligned}
}
\end{equation}
where {\small $\text{LN}_1^\ell$} and {\small $\text{LN}_2^\ell$} are layer normalizations.

The attention head in a Transformer is typically formulated as\footnote{For notational simplicity, we omit bias terms throughout this paper. This comes without loss of generality, since biases can be absorbed into the weight matrices by appending a constant 1 to the input vector.}:
\begin{equation}
\label{eqn:attention}
{\small
    \begin{aligned}
        \text{Attn}^{\ell, h}(\mathbf{X}^{\ell-1})_i = \sum_{j} a_{i,j}^{\ell, h} \mathbf{x}_j^{\ell -1} \mathbf{W}_V^{\ell, h} \mathbf{W}_O^{\ell, h},
    \end{aligned}
}
\end{equation}
where {\small $\mathbf{W}_V^{\ell, h} \in \mathbb{R}^{d \times d_h}$} and {\small $\mathbf{W}_O^{\ell, h} \in \mathbb{R}^{d_h \times d}$}, and {\small $h \in \{1,\ldots,H\}$} denotes the head index. Following the terminology of \cite{mathematical}, we refer to the computation of the attention weights $a_{i,j}^{\ell, h}$ as the QK circuit, and the transformation {\small $\mathbf{x}_j^{\ell -1} \mathbf{W}_V^{\ell, h} \mathbf{W}_O^{\ell, h}$} as the OV circuit\footnote{While \cite{mathematical} originally includes the unembedding matrix in the OV circuit definition, we exclude it in this work to focus on layer-wise operations.}.

The Attention layer is followed by a position-wise FFN:
\begin{equation}
\label{eqn:ffn}
{\small
    \begin{aligned}
        \text{FFN}^{\ell}(\mathbf{x}^{\text{mid},\ell}_i) = \sigma(\mathbf{x}^{\text{mid},\ell}_i \mathbf{W}_\text{in}^{\ell}) \mathbf{W}_\text{out}^{\ell},
    \end{aligned}
}
\end{equation}
where {\small $\sigma$} is a non-linear activation function, and {\small $\mathbf{W}_\text{in}^{\ell} \in \mathbb{R}^{d \times d_\text{FFN}}$}, {\small $\mathbf{W}_\text{out}^{\ell} \in \mathbb{R}^{d_\text{FFN} \times d}$}.

\subsection{AttenFeed Module}
\label{sec:method_af}

UMoE~\cite{umoe} interprets the OV circuit in \cref{eqn:attention} as an FFN without an activation function. Inspired by this interpretation, we apply minimal modifications to the standard attention mechanism to endow it with FFN-like behavior. Specifically, we simply introduce a non-linear activation function {\small $\sigma$} into the value projection to formulate the AttenFeed module as follows:
\begin{equation}
\label{eqn:af}
{\small
    \begin{aligned}
        \text{AttenFeed}^{\ell, h}(\mathbf{X}^{\ell-1})_i \coloneq \sum_{j} a_{i,j}^{\ell, h} \sigma(\mathbf{x}_j^{\ell -1} \mathbf{W}_V^{\ell, h}) \mathbf{W}_O^{\ell, h}.
    \end{aligned}
}
\vspace{-5pt}
\end{equation}
Unless otherwise mentioned, we use GELU~\cite{gelu} for {\small $\sigma$}.

This modification now allows the OV circuit to be interpreted as a standard FFN by explicitly incorporating a non-linear activation. At the same time, as this requires minimal alterations to the conventional attention mechanism, we expect the module to retain the intrinsic properties of the standard attention. In other words, while {\small $\mathbf{W}_V$} and {\small $\mathbf{W}_O$} can be interpreted as the value and output matrices of standard attention, respectively, the module can also be viewed as a standard FFN by redefining them as {\small $\mathbf{W}_V \coloneqq \mathbf{W}_\text{in}$} and {\small $\mathbf{W}_O \coloneqq \mathbf{W}_\text{out}$} given in \cref{eqn:ffn}. We show in \cref{exp:math_description} that the AttenFeed module serves as a mathematical generalization of Attention and FFN, and further empirically validate this finding in \cref{exp:attention,exp:ffn,exp:weight_comparison}.

\subsection{Unified Vision Transformer}
\label{sec:method_uvit}

Based on the AttenFeed module, we propose the unified Vision Transformer (uViT) (Figure~\ref{fig:teaser}a), where the update rule is given by:
\begin{equation}
\label{eqn:utransformer}
{\small
    \begin{aligned}
        \mathbf{x}_i^{\ell} &= \mathbf{x}_i^{\ell - 1} + \sum_{h}\text{AttenFeed}^{\ell,h} (\text{LN}^\ell(\mathbf{X}^{\ell - 1}))_i.
    \end{aligned}
}
\end{equation}
Thus, uViT replaces the coupled Attention--FFN structure of the standard Transformer with a single AttenFeed module per layer. We argue that \textit{this design alleviates structural constraints and the associated inductive bias of the standard Transformer} in \cref{exp:transfer}.

Structurally, uViT is equivalent to removing the FFN from a standard Transformer and introducing a parameter-free non-linear activation function, which results in approximately three times fewer parameters~\cite{ffn1}. To ensure a fair comparison, we align the total parameter count of uViT with that of the standard baseline by proportionally increasing its number of layers and model dimension. The detailed model configurations are summarized in \cref{tab:model_config}.

\begin{table}[tb]
  \caption{\textbf{Configurations of the ViT and uViT models.} $L, H$, and $D$ denote the number of layers, heads, and model dimension, respectively.
  }
  \label{tab:model_config}
  \centering
  \setlength{\tabcolsep}{11pt}
  \begin{tabular}{rccc}
    \toprule
    Name & Params. & ($L$, $H$, $D$) & MLPs \\
    \midrule
    ViT-S  & 21M & (12, 6, 384) & $\checkmark$ \\
    uViT-S & 20M & (12, 8, 640) & $\times$ \\
    \midrule
    ViT-B  & 86M & (12, 12, 768) & $\checkmark$ \\
    uViT-B & 80M & (12, 16, 1280) & $\times$ \\
    \midrule
    ViT-L  & 303M & (24, 16, 1024) & $\checkmark$ \\
    uViT-L & 289M & (28, 20, 1600) & $\times$ \\
  \bottomrule
  \end{tabular}
\end{table}

\section{AttenFeed as a Generalization of Attention and FFN}
\label{sec:integrated}
In \cref{exp:math_description}, we mathematically demonstrate that the AttenFeed module generalizes both Attention and FFN. Furthermore, in \cref{exp:ffn} and \cref{exp:attention}, we empirically show that the AttenFeed module exhibits the distinct characteristics of both Attention and FFN, respectively. Finally, in \cref{exp:weight_comparison}, we reveal that the parameters {\small $\mathbf{W}_V$} and {\small $\mathbf{W}_O$} within the AttenFeed module converge to intermediate values between those of standard attention and standard FFN. Ultimately, these findings indicate that the \textit{AttenFeed module mitigates the inductive biases inherent in Attention--FFN separation, effectively inheriting the properties of both.}

\subsection{Mathematical Description}
To claim that the AttenFeed module generalizes both Attention and FFN, we present the following propositions. The detailed proofs are provided in \cref{appendix:math_description}.
\label{exp:math_description}
\begin{proposition}
\label{prop1}
If an AttenFeed head {\small $\text{AttenFeed}^{\ell, h}$} is self-focused, then the head's output approximates a position-wise FFN: {\small $\text{AttenFeed}^{\ell, h}(\mathbf{X}^{\ell-1})_i \approx \text{FFN}(\mathbf{x}_i^{\ell-1})$}.
\end{proposition}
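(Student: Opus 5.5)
The plan is to make ``self-focused'' quantitative and then reduce the claim to a one-line estimate. We take it to mean that the attention row of head $h$ is concentrated on the query's own position: $a_{i,i}^{\ell,h} \ge 1-\epsilon$ for some small $\epsilon \ge 0$. Since the weights are softmax outputs, they are nonnegative and sum to one, so $\sum_{j\neq i} a_{i,j}^{\ell,h} \le \epsilon$. The exact case $\epsilon=0$, i.e.\ $a_{i,j}^{\ell,h}=\delta_{ij}$, is the idealized version; we treat it first and then handle $\epsilon>0$ by perturbation.

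First we split the sum in \cref{eqn:af} into the diagonal term and the off-diagonal remainder:
\[
\text{AttenFeed}^{\ell,h}(\mathbf{X}^{\ell-1})_i = a_{i,i}^{\ell,h}\,\sigma(\mathbf{x}_i^{\ell-1}\mathbf{W}_V^{\ell,h})\mathbf{W}_O^{\ell,h} + \sum_{j\neq i} a_{i,j}^{\ell,h}\,\sigma(\mathbf{x}_j^{\ell-1}\mathbf{W}_V^{\ell,h})\mathbf{W}_O^{\ell,h}.
\]
We then use the identification from \cref{sec:method_af}, $\mathbf{W}_\text{in}\coloneqq \mathbf{W}_V^{\ell,h}$ and $\mathbf{W}_\text{out}\coloneqq \mathbf{W}_O^{\ell,h}$. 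With these choices the diagonal term equals $a_{i,i}^{\ell,h}\,\text{FFN}(\mathbf{x}_i^{\ell-1})$ exactly, with $\text{FFN}$ as in \cref{eqn:ffn} and hidden width $d_\text{FFN}=d_h$. In the exact case $\epsilon=0$ this already gives equality.

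For $\epsilon>0$ we bound the difference by the triangle inequality:
\[
\bigl\|\text{AttenFeed}^{\ell,h}(\mathbf{X}^{\ell-1})_i - \text{FFN}(\mathbf{x}_i^{\ell-1})\bigr\| \le (1-a_{i,i}^{\ell,h})\,\|\text{FFN}(\mathbf{x}_i^{\ell-1})\| + \sum_{j\neq i} a_{i,j}^{\ell,h}\,\|\text{FFN}(\mathbf{x}_j^{\ell-1})\| \le 2\epsilon\, \max_j \|\text{FFN}(\mathbf{x}_j^{\ell-1})\|.
\]
For GELU we have $|\sigma(z)|\le |z|$, so each term satisfies $\|\text{FFN}(\mathbf{x}_j)\| \le \|\mathbf{x}_j\|\,\|\mathbf{W}_V\|\,\|\mathbf{W}_O\|$. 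The approximation error is therefore $O\bigl(\epsilon\,\|\mathbf{W}_V\|\|\mathbf{W}_O\|\max_j\|\mathbf{x}_j\|\bigr)$. This vanishes as $\epsilon\to 0$ whenever the tokens are bounded, and in uViT they are, since the input is layer-normalized.

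The main obstacle is not the estimate, which is routine, but justifying ``self-focused'' as a realizable regime rather than a bare assumption. We would address this by noting that the standard QK circuit can produce near-diagonal attention. For instance, take $\mathbf{W}_Q=\mathbf{W}_K$ scaled by $\beta$. The logits become $\beta\langle \mathbf{x}_i\mathbf{W}_Q,\mathbf{x}_j\mathbf{W}_Q\rangle$, and if the projected tokens are distinct with comparable norms, as layer normalization encourages, the self-logit dominates. Letting $\beta\to\infty$ then drives $a_{i,i}\to 1$ at an exponential rate in $\beta$. One caveat to state explicitly is that the recovered FFN has hidden width $d_h$ per head. Summing over heads $h$ that are all self-focused, the heads' output is a single FFN with concatenated $\mathbf{W}_\text{in}=[\mathbf{W}_V^{\ell,1},\dots,\mathbf{W}_V^{\ell,H}]$ and stacked $\mathbf{W}_\text{out}$, of width $H d_h$.
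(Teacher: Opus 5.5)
Your argument is correct and follows the paper's own route. You split the sum into diagonal and off-diagonal parts, self-focus kills the off-diagonal part, and the diagonal part is an FFN under $\mathbf{W}_{\text{in}}\coloneqq\mathbf{W}_V^{\ell,h}$, $\mathbf{W}_{\text{out}}\coloneqq\mathbf{W}_O^{\ell,h}$. What you add is a quantitative version: your $2\epsilon\max_j\|\text{FFN}(\mathbf{x}_j^{\ell-1})\|$ bound, via $|\text{GELU}(z)|\le|z|$, makes explicit the boundedness of the off-diagonal terms that the paper's substitution $a_{i,j}^{\ell,h}\approx 0$ tacitly assumes.

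Your $\beta$-scaled $\mathbf{W}_Q=\mathbf{W}_K$ construction replaces the paper's appeal to empirical diagonal dominance, and it is not needed, since self-focus is the hypothesis. If you keep it, ``comparable norms'' should be the strict gap $\|\mathbf{u}_i\|^2>\langle\mathbf{u}_i,\mathbf{u}_j\rangle$ for $\mathbf{u}_j=\mathbf{x}_j\mathbf{W}_Q$, $j\neq i$ (e.g.\ equal projected norms and distinct tokens). Comparable norms are not enough: $\mathbf{u}_j=1.1\,\mathbf{u}_i$ violates the gap. Layer normalization does not rescue this either, because it acts on $\mathbf{x}_j$, not on $\mathbf{x}_j\mathbf{W}_Q$.
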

\begin{proposition}
\label{prop2}
Assume the pre-activation inputs to the GELU function are sufficiently small, such that {\small $\| \mathbf{x}_j^{\ell-1} \mathbf{W}_V^{\ell, h} \| \to 0$} for all {\small $j$}. Under this condition, the AttenFeed head approximates a standard attention head: {\small $\text{AttenFeed}^{\ell, h}(\mathbf{X}^{\ell-1}) \approx \text{Attn}^{\ell,h}(\mathbf{X}^{\ell-1})$}.
\end{proposition}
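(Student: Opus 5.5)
The plan is a first-order Taylor expansion of GELU at the origin, followed by linearity of the attention sum. Write $\mathbf{z}_j \coloneqq \mathbf{x}_j^{\ell-1}\mathbf{W}_V^{\ell,h}$. Since $\text{GELU}(z) = z\,\Phi(z)$ with $\Phi$ the standard normal CDF, and $\Phi(0)=\tfrac12$, we have
\begin{equation*}
\text{GELU}(z) = \tfrac{1}{2} z + \phi(0) z^2 + O(|z|^3), \qquad \phi(0)=\tfrac{1}{\sqrt{2\pi}}.
\end{equation*}
Here $\phi$ is the standard normal density. This expansion holds coordinatewise, and as $\|\mathbf{z}_j\|\to 0$ it gives $\sigma(\mathbf{z}_j) = \tfrac12 \mathbf{z}_j + O(\|\mathbf{z}_j\|^2)$.

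\textbf{Step 1: substitute the expansion.} Plugging this into the AttenFeed definition \cref{eqn:af} and using linearity in $\mathbf{W}_O^{\ell,h}$ gives
\begin{equation*}
\text{AttenFeed}^{\ell,h}(\mathbf{X}^{\ell-1})_i = \tfrac12 \sum_j a^{\ell,h}_{i,j}\,\mathbf{x}_j^{\ell-1}\mathbf{W}_V^{\ell,h}\mathbf{W}_O^{\ell,h} + \sum_j a^{\ell,h}_{i,j}\, O(\|\mathbf{z}_j\|^2)\,\mathbf{W}_O^{\ell,h}.
\end{equation*}
The first term is exactly $\tfrac12\,\text{Attn}^{\ell,h}(\mathbf{X}^{\ell-1})_i$ from \cref{eqn:attention}, with the same QK circuit $a^{\ell,h}_{i,j}$. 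The QK circuit is untouched by the modification, so it contributes no error.

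\textbf{Step 2: bound the remainder.} The attention weights are softmax outputs, so they are nonnegative and satisfy $\sum_j a_{i,j}=1$. The remainder is therefore a convex combination. Its norm is at most $C\,\|\mathbf{W}_O^{\ell,h}\|\max_j\|\mathbf{z}_j\|^2$, which is $o(\max_j \|\mathbf{z}_j\|)$. The main term, by contrast, is of first order in $\max_j\|\mathbf{z}_j\|$. Hence the relative error vanishes in the limit.

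\textbf{Step 3: absorb the factor $\tfrac12$.} Rescale the output matrix, $\widetilde{\mathbf{W}}_O^{\ell,h} \coloneqq \tfrac12\mathbf{W}_O^{\ell,h}$. Equivalently, note that the factor is a fixed constant that the learned weights absorb, in the same way the paper absorbs biases. This yields $\text{AttenFeed}^{\ell,h}(\mathbf{X}^{\ell-1}) \approx \text{Attn}^{\ell,h}(\mathbf{X}^{\ell-1})$ up to this reparametrization.

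\textbf{Main obstacle.} The delicate point is stating precisely what "$\approx$" means. Both sides tend to zero as $\|\mathbf{z}_j\|\to 0$, so an absolute error bound alone is vacuous. I would make the statement rigorous as a relative one:
\begin{equation*}
\bigl\|\text{AttenFeed}(\mathbf{X})_i - \tfrac12\text{Attn}(\mathbf{X})_i\bigr\| = O\bigl(\max_j\|\mathbf{z}_j\|^2\bigr).
\end{equation*}
Equivalently, one can scale the inputs as $\epsilon\mathbf{W}_V$ and compare both sides after dividing by $\epsilon$.

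A caveat remains when $\text{Attn}^{\ell,h}(\mathbf{X})_i$ happens to vanish at first order through cancellation. In that case the relative statement degenerates, so I would present the absolute $O(\max_j\|\mathbf{z}_j\|^2)$ bound as the primary claim.
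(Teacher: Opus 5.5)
Your proof is correct, but it takes a different route from the paper's. The paper does not Taylor-expand. It invokes activation sparsity and splits the coordinates of $\mathbf{z}_j$ into two groups. On a sparse set of large positive entries it uses $\text{GELU}(z)\approx z$. On a dense set of near-zero entries it argues $\text{GELU}(z)\approx 0\approx z$ up to negligible \emph{absolute} error. It then assumes large negative entries are suppressed, replaces GELU by the identity, and substitutes into the AttenFeed definition.

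Your expansion $\text{GELU}(z)=\tfrac12 z+\phi(0)z^2+O(|z|^3)$ is the argument that actually uses the stated hypothesis $\|\mathbf{z}_j\|\to 0$. The paper's ``sparse active regime'' of large positive activations sits in tension with that hypothesis. Your route buys three things:
\begin{itemize}
\item Together with the convexity of the softmax weights, it gives an explicit remainder of $O(\max_j\|\mathbf{z}_j\|^2)$.
\item It shows that in the small-input regime the slope is $\tfrac12$, not $1$. The paper's identity approximation therefore holds only in the absolute sense you rightly call vacuous.
\item It makes clear that a non-trivial first-order match needs your rescaling $\widetilde{\mathbf{W}}_O^{\ell,h}=\tfrac12\mathbf{W}_O^{\ell,h}$, i.e.\ a comparison with a standard head whose parameters differ.
\end{itemize}
What the paper's heuristic buys in exchange is a plausible account of trained networks in which some activations are large and positive. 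There slope $1$ is appropriate and no reparametrization is needed, but this is not a proof under the proposition's own hypothesis.

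Your caveat about Step 2 is justified. The claim that the relative error vanishes would need a lower bound on $\|\text{Attn}^{\ell,h}(\mathbf{X}^{\ell-1})_i\|$, which you do not have. The cleanest rigorous form is the scaling version you mention. Replace $\mathbf{W}_V^{\ell,h}$ by $\epsilon\mathbf{W}_V^{\ell,h}$ and divide the AttenFeed output by $\epsilon$. As $\epsilon\to 0$ it converges to $\tfrac12\,\text{Attn}^{\ell,h}(\mathbf{X}^{\ell-1})_i$ with error $O(\epsilon)$. This needs no lower bound, because the weights $a_{i,j}^{\ell,h}$ come from the QK circuit and do not depend on $\mathbf{W}_V^{\ell,h}$.
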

\begin{proposition}
\label{prop3}
Let an AttenFeed head $\text{AttenFeed}^{\ell, h}$ follow the conditions of Proposition~\ref{prop1}. Then, there exists a set of parameterizations for the AttenFeed head such that {\small $\text{AttenFeed}^{\ell,h}(\mathbf{X}^{\ell-1}) \approx \text{Attn}^{\ell,h}(\mathbf{X}^{\ell-1})$}.
\end{proposition}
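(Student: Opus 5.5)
The plan is to use the self-focused condition of Proposition~\ref{prop1} to collapse both heads to position-wise maps, and then to show that the GELU-based map can reproduce a linear map. Under self-focus we have $a_{i,j}^{\ell,h}\approx \delta_{ij}$. Substituting this into \cref{eqn:attention} gives $\text{Attn}^{\ell,h}(\mathbf{X}^{\ell-1})_i \approx \mathbf{x}_i^{\ell-1}\mathbf{W}_V^{\ell,h}\mathbf{W}_O^{\ell,h}$. By Proposition~\ref{prop1}, the AttenFeed head reduces to $\sigma(\mathbf{x}_i^{\ell-1}\widetilde{\mathbf{W}}_V)\widetilde{\mathbf{W}}_O$. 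It therefore suffices to choose $\widetilde{\mathbf{W}}_V,\widetilde{\mathbf{W}}_O$ so that this one-hidden-layer GELU network reproduces the linear map $\mathbf{x}\mapsto \mathbf{x}\mathbf{W}_V\mathbf{W}_O$. The query/key matrices are kept identical, so the QK circuit, and hence the self-focus, is shared by both heads.

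\textbf{Main construction: an exact identity.} Since $\text{GELU}(z)=z\Phi(z)$ and $\Phi(z)+\Phi(-z)=1$, we get $\text{GELU}(z)-\text{GELU}(-z)=z\Phi(z)+z\Phi(-z)=z$ for every $z$. I would set
\[
\widetilde{\mathbf{W}}_V=[\,\mathbf{W}_V,\;-\mathbf{W}_V\,]\in\mathbb{R}^{d\times 2d_h},\qquad \widetilde{\mathbf{W}}_O=\begin{bmatrix}\mathbf{W}_O\\ -\mathbf{W}_O\end{bmatrix}.
\]
Then $\sigma(\mathbf{x}\widetilde{\mathbf{W}}_V)\widetilde{\mathbf{W}}_O=\bigl(\sigma(\mathbf{x}\mathbf{W}_V)-\sigma(-\mathbf{x}\mathbf{W}_V)\bigr)\mathbf{W}_O=\mathbf{x}\mathbf{W}_V\mathbf{W}_O$ holds exactly. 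Because this identity is applied to each value vector before the weighted sum, it even gives equality for arbitrary attention weights. The only approximation left is the one inherited from the self-focus assumption.

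\textbf{Alternative if the head width must stay $d_h$.} This is the step I expect to be the main obstacle: the doubling trick needs a head of width $2d_h$, or a pair of heads sharing QK weights. If the head dimension must stay fixed, I would instead use a scaling argument in the spirit of Proposition~\ref{prop2}. Take $\widetilde{\mathbf{W}}_V=\varepsilon\mathbf{W}_V$ and $\widetilde{\mathbf{W}}_O=\mathbf{W}_O/(\varepsilon\,\sigma'(0))=2\mathbf{W}_O/\varepsilon$. The Taylor expansion $\text{GELU}(z)=z/2+z^2/\sqrt{2\pi}+O(z^4)$ then gives an output of $\mathbf{x}\mathbf{W}_V\mathbf{W}_O+\mathcal{O}(\varepsilon\|\mathbf{x}\mathbf{W}_V\|^2\|\mathbf{W}_O\|)$. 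This error vanishes as $\varepsilon\to0$, uniformly on bounded inputs, and LayerNorm guarantees bounded inputs. The delicate point is controlling the quadratic remainder uniformly, and that is where the boundedness supplied by $\text{LN}^\ell$ enters.
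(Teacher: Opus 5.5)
Your argument is correct, but it takes a different route from the paper. The paper also uses Proposition~\ref{prop1} to view the self-focused head as a GELU FFN, but it then argues in the opposite direction. It fits GELU to SiLU through the minimax rescaling $\Phi(\lambda^* x)\approx\mathrm{sigmoid}(x)$ with $\lambda^*\approx 1.702$. It then invokes Theorem~4 of the cited work to read each SiLU hidden unit as a rank-one attention head, with the sigmoid gate acting as a softmax weight over the token and a masked slot, so the $d_h$ hidden units sum to a multi-head attention. That route keeps the width $d_h$ and the weight scale essentially unchanged. However, it depends on an external theorem and leaves a fixed, non-vanishing GELU/SiLU error. Also, its attention weights come from the FFN gate rather than the original QK circuit, so it exhibits the head as \emph{some} attention rather than reproducing a prescribed one.

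You instead fix a target head, share its QK circuit, and realize its OV map with a GELU layer. This is exact via $\text{GELU}(z)-\text{GELU}(-z)=z$ at width $2d_h$, and already at width $d_h$ whenever $\mathrm{rank}(\mathbf{W}_V\mathbf{W}_O)\le d_h/2$. At width $d_h$ it holds to arbitrary precision via your $\varepsilon$-rescaling. There the global bound $|\text{GELU}(z)-z/2|=|z|\,|\Phi(z)-\tfrac12|\le z^2/\sqrt{2\pi}$ makes the remainder control immediate, so nothing about it is delicate. Your approach is elementary, self-contained, and in fact stronger. Both constructions act on each value vector before the convex combination $\sum_j a_{i,j}(\cdot)$, so neither actually needs self-focus. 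The cost is doubled width (or a pair of QK-sharing heads), or output weights growing like $1/\varepsilon$. Your $\varepsilon$-version is essentially a quantitative form of Proposition~\ref{prop2} that correctly compensates for $\text{GELU}'(0)=1/2$.
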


Proposition~\ref{prop1} implies that when the attention of the AttenFeed module is self-focused, it becomes equivalent to an FFN. Proposition~\ref{prop2} demonstrates that the AttenFeed module can approximate standard attention. Proposition~\ref{prop3} more rigorously shows that when the AttenFeed module is self-focused, it approximates standard attention under looser conditions than those required in Proposition~\ref{prop2}. Consequently, these propositions mathematically prove that the AttenFeed module is a generalization of both Attention and FFN.

\subsection{FFN-like Behavior of the AttenFeed Module}
\label{exp:ffn}
\begin{table*}[tb]
  \caption{\textbf{Performance and representation rank comparison on CIFAR10 and CIFAR100.} The results demonstrate that uViT (composed of AttenFeed modules) consistently maintains a higher normalized rank and achieves superior accuracy compared to xViT (ViT without FFNs), confirming that the AttenFeed module effectively inherits the rank-preserving properties of FFNs.}
  \label{tab:ffn}
  \centering
  \setlength{\tabcolsep}{5.6pt}
  \begin{tabular}{ccccccccc}
    \toprule
    & & & \multicolumn{3}{c}{CIFAR10} & \multicolumn{3}{c}{CIFAR100} \\
    \cmidrule(lr){4-6} \cmidrule(lr){7-9}
    Model & ($L$, $H$, $D$) & Params. & Test acc. & $\text{Rank}_\text{(pre-head)}$ & $\text{Rank}_\text{(post-head)}$ & Test acc. & $\text{Rank}_\text{(pre-head)}$ & $\text{Rank}_\text{(post-head)}$ \\
    \midrule
        xViT & \multirow{2}{*}{(6, 4, 128)} & \multirow{2}{*}{0.43M} & 78.34\% & 0.375 & 0.70 & 50.94\% & 0.406 & 0.25 \\
        uViT & & & 84.00\% & 0.383 & 0.70 & 58.56\% & 0.508 & 0.28 \\
        \cmidrule(lr){1-3} \cmidrule(lr){4-6} \cmidrule(lr){7-9} 
        xViT & \multirow{2}{*}{(6, 4, 160)} & \multirow{2}{*}{0.67M} & 78.59\% & 0.356 & 0.70 & 52.73\% & 0.463 & 0.29 \\
        uViT & & & 85.16\% & 0.369 & 0.70 & 59.80\% & 0.506 & 0.32 \\
        \cmidrule(lr){1-3} \cmidrule(lr){4-6} \cmidrule(lr){7-9} 
        xViT & \multirow{2}{*}{(12, 4, 128)} & \multirow{2}{*}{0.83M} & 80.13\% & 0.430 & 0.70 & 52.82\% & 0.586 & 0.29 \\
        uViT & & & 85.60\% & 0.445 & 0.80 & 58.69\% & 0.586 & 0.30 \\
    \bottomrule
  \end{tabular}
\end{table*}

To empirically validate Proposition~\ref{prop1}, we examine whether the AttenFeed module exhibits the functional properties of an FFN. Specifically, we focus on the established observation that FFNs play a critical role in mitigating rank collapse within Transformers~\cite{ffn_collapse, ffn_collapse2, ffn_collapse3, park2022vision}. Rank collapse refers to a phenomenon where the latent representations of different tokens become increasingly correlated as the network depth increases. This leads to “token uniformity,” where the model loses its expressivity because the features of distinct tokens become nearly indistinguishable. Models composed solely of attention layers are therefore known to exhibit a reduction in the rank of their representations~\cite{ffn_collapse}.

As shown in \cref{eqn:af}, the AttenFeed module introduces a GELU activation into the attention mechanism. This modification endows the conventional attention mechanism with FFN-like properties (\cref{exp:math_description}). Thus, to empirically verify that AttenFeed module possesses FFN-like behavior, we compare the representational rank of uViT against a ViT variant in which the FFN is removed, resulting in a model composed solely of attention layers. We denote this FFN-removed ViT as xViT. Given that FFNs are known to play a pivotal role in preventing rank collapse, we hypothesize that xViT will exhibit a relatively lower rank. Conversely, we expect uViT to maintain a higher rank, as its AttenFeed modules are designed to integrate FFN-like functionalities.

\paragraph{Settings.}
First, we train xViT on a training set. Then, we freeze the model and record all the outputs of the \texttt{[CLS]} token on the associated validation set. Next, we measure the rank by performing Singular Value Decomposition (SVD) and counting the number of eigenvalues that account for 95\% of the variance~\cite{rankmeasure1, rankmeasure2} (for details, see \cref{appendix:rank_procedure}). This rank value is then divided by the maximum possible rank to present the normalized rank~\cite{normalizedrank}.\footnote{For instance, given {\small $N$} \texttt{[CLS]} token representations in {\small $\mathbb{R}^{N \times d}$}, the maximum possible rank is {\small $d$}; thus, if the measured rank is {\small $r$}, the normalized rank is computed as {\small $r / d$}.} Subsequently, we insert a GELU activation into the attention module of xViT to transform it into uViT, and repeat the identical procedure to measure its normalized rank. We use CIFAR-10 and CIFAR-100~\cite{cifar} as train datasets.

\paragraph{Results.}
The experimental results are summarized in \cref{tab:ffn}. We define pre-head rank and post-head rank as the normalized ranks of representations immediately preceding and following the classification head. When comparing xViT and uViT under identical configurations, uViT consistently exhibits a higher output rank, which correlates with improved test accuracy. These trends remain robust across various datasets (\cref{appendix:exp_result_detail}). Given that the only structural disparity between uViT and xViT is the presence of the GELU activation, these findings suggest that the non-linear function imparts FFN-like, rank-preserving properties to the AttenFeed module.

\subsection{Attention-like Behavior of AttenFeed Module}

\label{exp:attention}
To empirically validate Proposition~\ref{prop2} and \ref{prop3}, we examine whether the AttenFeed module exhibits attention-like behavior through attention-based segmentation. Attention maps in ViTs visualize the salient image regions that the model focuses on when making predictions~\cite{vit, tscam, regtoken}. These maps can therefore be leveraged to obtain segmentation masks for object regions~\cite{conceptattention, chefer2021transformer}. Thus, we assess whether the attention maps produced by ViT can also be utilized to generate meaningful segmentation masks.

\paragraph{Settings.}
First, we pretrain ViT-S/B/L and uViT-S/B/L on the ImageNet-1k dataset~\cite{imagenet}. To ensure a fair comparison, all models are trained under identical conditions (please refer to \cref{appendix:exp_setting_detail} for further details). After pretraining, we freeze the models and perform image segmentation. Following \cite{conceptattention, chefer2021transformer}, we extract the attention map of the \texttt{[CLS]} token from the final layer for a given input image. Regions with values exceeding the mean attention score are binarized to produce the segmentation mask.
We evaluate the segmentation performance of these attention-based masks on ImageNet-Segmentation~\cite{imagenetsegdataset} and PASCAL-VOC~\cite{pascalvoc} datasets.

\paragraph{Results.}
\begin{table}[tb]
  \caption{\textbf{Segmentation performance using the attention maps from ViT and uViT.} The results demonstrate that the AttenFeed module effectively generates meaningful segmentation masks, performing comparably to standard attention modules.}
  \label{tab:attn}
  \centering
  \setlength{\tabcolsep}{4.6pt}
  \begin{tabular}{rcccccc}
    \toprule
    & \multicolumn{3}{c}{ImageNet-Seg.} & \multicolumn{3}{c}{Pascal-VOC {\tiny (Single class)}} \\ \cmidrule(lr){2-4}
    \cmidrule(lr){5-7}
    Model & Acc. & mIoU & mAP & Acc. & mIoU & mAP \\
    \midrule
    ViT-S & 0.772 & 0.629 & 0.821 & 0.745 & 0.550 & 0.853 \\
    uViT-S & 0.699 & 0.538 & 0.806 & 0.689 & 0.490 & 0.847 \\
    \midrule
    ViT-B & 0.725 & 0.569 & 0.801 & 0.707 & 0.480 & 0.844 \\
    uViT-B & 0.680 & 0.516 & 0.775 & 0.682 & 0.449 & 0.828 \\
    \midrule
    ViT-L & 0.666 & 0.500 & 0.765 & 0.677 & 0.422 & 0.819 \\
    uViT-L & 0.617 & 0.447 & 0.767 & 0.631 & 0.384 & 0.821 \\
    \bottomrule
  \end{tabular}
\end{table}

\begin{figure*}[t]
  \centering
  \includegraphics[width=0.95\textwidth]{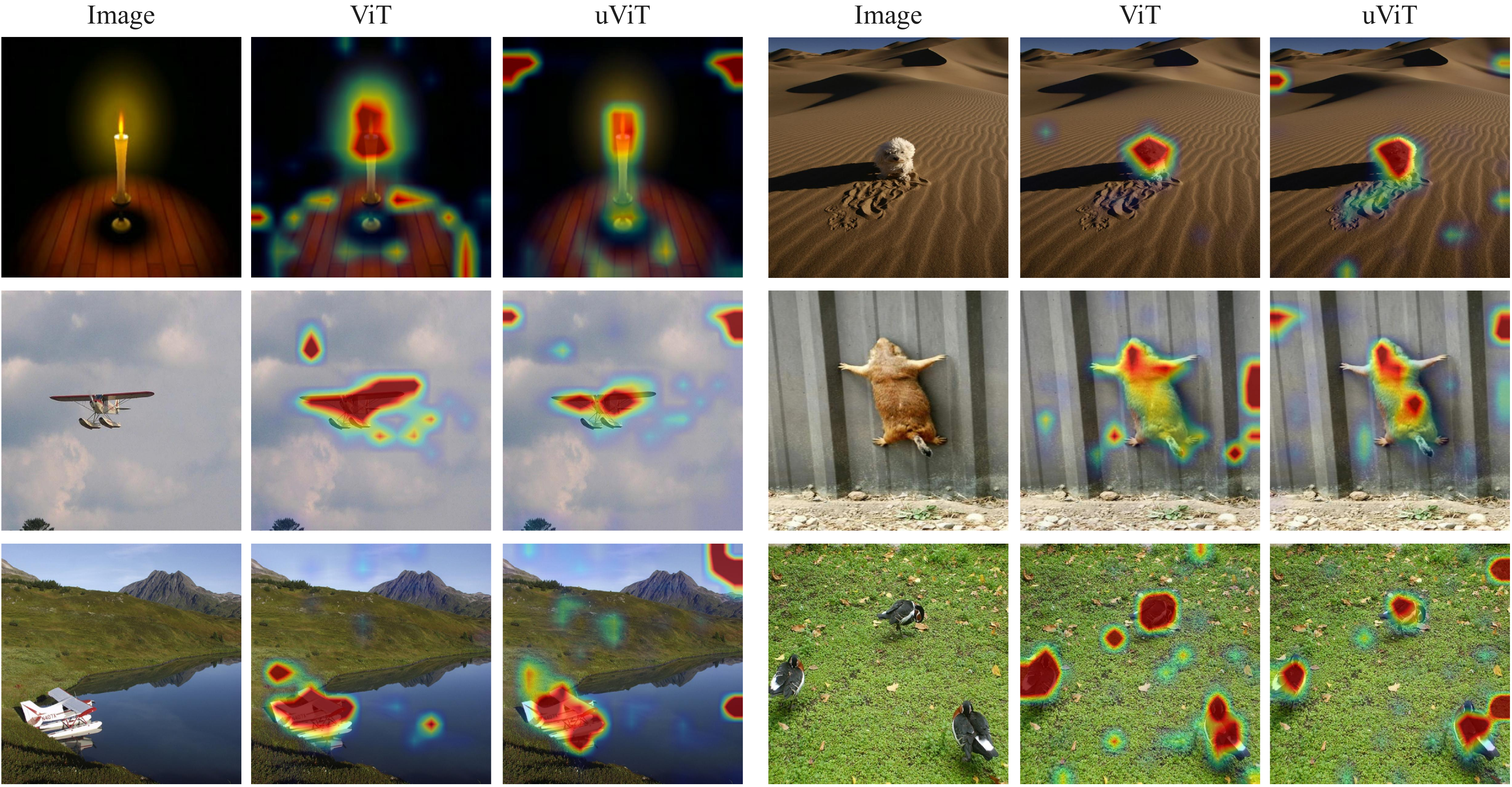}
  \caption{\textbf{Qualitative comparison of attention maps between ViT and uViT.}}
  \label{fig:attn_qualitative}
\end{figure*}

The results are presented in \cref{tab:attn}. The attention map segmentation performance of uViT is slightly lower than that of ViT. This is expected, since the AttenFeed module in uViT must simultaneously act as both attention and FFN (\cref{exp:ffn}), unlike the attention module in ViT. Nevertheless, the performance gap is not substantial, with uViT-L even achieving a higher mAP than ViT-L.

Furthermore, we present qualitative segmentation results in \cref{fig:attn_qualitative}. The first column shows the original input images, whereas the second and third columns display the attention maps of ViT and uViT, respectively. As shown in the figure, uViT effectively captures salient object regions in the input image, suggesting that the AttenFeed module exhibits attention-like properties. Additional qualitative results are provided in \cref{appendix:qualitative}.

\subsection{Comparing Parametric Spaces of uViT and ViT}
\label{exp:weight_comparison}
As discussed in \cref{sec:method_af} and \cref{appendix:math_description}, {\small $\mathbf{W}_V$} in the AttenFeed module can be interpreted as either {\small $\mathbf{W}_V$} in the attention mechanism of a standard transformer or as {\small $\mathbf{W}_\text{in}$} in the FFN. Similarly, {\small $\mathbf{W}_O$} in the AttenFeed module can be interpreted as either {\small $\mathbf{W}_O$} or {\small $\mathbf{W}_\text{out}$} in the standard transformer.
Therefore, if our assumption that the AttenFeed module generalizes both Attention and FFN is valid, the parameters {\small $\mathbf{W}_V$} and {\small $\mathbf{W}_O$} of the AttenFeed module are expected to lie in an intermediate region between the corresponding parameters of the Attention and FFN modules in a standard Transformer. In this section, we empirically verify this hypothesis.

\paragraph{Settings.}
We visualize the parameter distributions of the ViT-B and uViT-B models used in \cref{exp:attention}. The analysis is conducted from two perspectives:
\begin{itemize}
\item[$\circ$] Visualizing the value distributions of {\small $\mathbf{W}_V$} and {\small $\mathbf{W}_\text{in}$} in ViT, and {\small $\mathbf{W}_V$} in uViT, at a specific layer (\cref{fig:weight_comparison}a).
\item[$\circ$] Visualizing the value distributions of {\small $\mathbf{W}_O$} and {\small $\mathbf{W}_\text{out}$} in ViT, and {\small $\mathbf{W}_O$} in uViT, at a specific layer (\cref{fig:weight_comparison}b).
\end{itemize}
The distribution of each matrix element is represented by connecting the midpoints of histogram bins to form a continuous density profile.

\paragraph{Results.}
\begin{figure*}[t]
  \centering
  \includegraphics[width=0.95\textwidth]{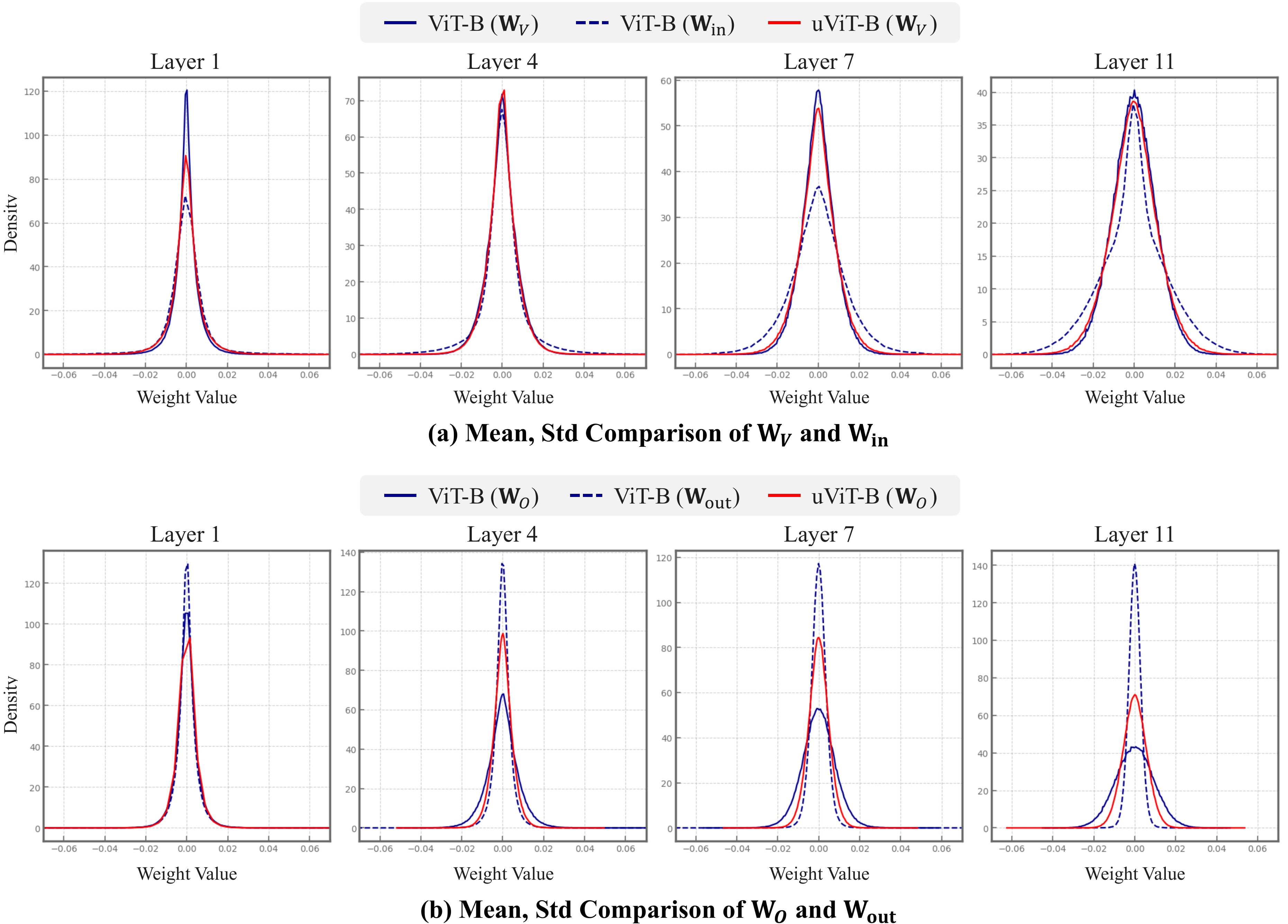}
  \caption{\textbf{Weight value distributions of uViT and standard ViT.} The results show that uViT parameters occupy a transitional space between standard attention and FFN regimes, providing empirical evidence for the unified nature of the AttenFeed module.}
  \label{fig:weight_comparison}
\end{figure*}
As illustrated in \cref{fig:weight_comparison}, the values of {\small $\mathbf{W}_V$} and {\small $\mathbf{W}_O$} in the AttenFeed module occupy an intermediate range between those of the standard attention and FFN components. These results suggest that the parameter space of the AttenFeed module does not deviate significantly from the regimes of standard attention and FFN, supporting the functional generalization of the AttenFeed module. Additional results are provided in \cref{appendix:exp_result_detail}.

\begin{center}
\begin{tcolorbox}[
    width=0.48\textwidth,
    colframe=darkgray,
    colback=orange!3,
    boxrule=1pt,
    arc=1mm,
    boxsep=2pt
]
\textbf{Finding of \cref{sec:integrated}}:
AttenFeed module mathematically generalizes standard Attention and FFN, and empirically exhibits properties of both attention and FFN.
\end{tcolorbox}
\end{center}
\section{Impact of Inductive Bias in ViT}
\label{exp:inductive}
Unified Vision Transformer (uViT) consists of a sequence of AttenFeed modules, each capable of functioning as both Attention and FFN. Consequently, unlike the standard ViT, uViT is expected to be liberated from the inductive biases inherent in the conventional Attention--FFN dichotomy. In \cref{exp:transfer}, we experimentally demonstrate that uViT indeed possesses weaker inductive bias than ViT. Subsequently, in \cref{exp:scaling}, we compare the pretraining performance of uViT and ViT, showing that the Attention--FFN dichotomy in ViT has a detrimental effect on performance when the model size is small.

\subsection{Assessing Inductive Bias via Transfer Learning}
\label{exp:transfer}
\begin{table*}[tb]
  \caption{\textbf{Transfer learning performance comparison between ViT and uViT.} The results include Top-1 accuracy (\%) for linear probing and full fine-tuning across four downstream datasets (SVHN, CIFAR10, CIFAR100, and STL10).}
  \label{tab:transfer}
  \centering
  \setlength{\tabcolsep}{10pt}
  \begin{tabular}{rcccccccc}
    \toprule
    & \multicolumn{4}{c}{Linear Probing (\%)} & \multicolumn{4}{c}{Full Fine-tuning (\%)} \\
    \cmidrule(lr){2-5}
    \cmidrule(lr){6-9}
    Model & SVHN & CIFAR10 & CIFAR100 & STL10 & SVHN & CIFAR10 & CIFAR100 & STL10 \\
    \midrule
    ViT-S & 40.03 & 89.94 & 80.84 & 58.64 & 93.03 & 75.68 & 51.09 & 52.13 \\
    uViT-S & 56.57 & 95.40 & 90.99 & 72.67 & 91.76 & 73.77 & 47.77 & 50.63 \\
    \midrule
    ViT-B & 47.91 & 91.73 & 73.10 & 96.93 & 91.08 & 73.58 & 44.73 & 50.04 \\
    uViT-B & 60.84 & 93.75 & 77.85 & 97.36 & 91.70 & 74.98 & 47.41 & 51.56 \\
    \midrule
    ViT-L & 50.69 & 93.23 & 76.05 & 97.63 & 87.35 & 70.10 & 43.90 & 47.66 \\
    uViT-L & 58.80 & 94.41 & 79.16 & 97.54 & 93.33 & 74.28 & 46.20 & 53.35 \\
    \bottomrule
  \end{tabular}
\end{table*}

uViT relaxes the strict separaration of Attention and FFN architecture in ViT. As a result, uViT is expected to exhibit less inductive bias than ViT and therefore holds the potential to learn more generalized representations~\cite{vit}. To examine this hypothesis, we compare the transfer learning performance of pre-trained uViT and ViT on downstream datasets. If uViT indeed possesses less inductive bias than ViT, it should demonstrate better transfer performance than ViT.

\paragraph{Settings.}
We further train ViT-S/B/L and uViT-S/B/L, which were pre-trained on ImageNet-1k as described in \cref{exp:attention}, on the SVHN, CIFAR10, CIFAR100, and STL10 datasets~\cite{svhn, cifar, stl10}. For a fair comparison, all training hyperparameters were kept strictly identical (see \cref{appendix:exp_setting_detail}). To evaluate the flexibility of the output representations of pre-trained models, we report the performance of training only the classification head (linear probing). In addition, to assess the flexibility of the model’s internal parameters, we present full fine-tuning results.

\paragraph{Results.}
The experimental results are presented in \cref{tab:transfer}. uViT outperformed ViT across all evaluated scenarios for linear probing, and for full fine-tuning, uViT achieved superior performance in the majority of cases. These results indicate that both the output representations and the internal parameters of uViT possess greater flexibility when adapting to unseen datasets. These results provide strong evidence that uViT exhibits less inductive bias than ViT.

\begin{center}
\begin{tcolorbox}[
    width=0.48\textwidth,
    colframe=darkgray,
    colback=orange!3,
    boxrule=1pt,
    arc=1mm,
    boxsep=2pt
]
\textbf{Finding of \cref{exp:transfer}}: uViT exhibits less inductive bias imposed by the Attention--FFN dichotomy than ViT.
\end{tcolorbox}
\end{center}

\subsection{The Role of Inductive Bias at Scale}
\label{exp:scaling}
\begin{table}[tb]
  \caption{\textbf{Pre-training performance across different model scales.} uViT shows a significant performance advantage over ViT at the Small scale, while the gap narrows at the Base scale. {\small $^\dagger$}We doubled the batch size for the experiments, since convergence of uViT was unstable under the default setting.
  }
  \label{tab:pretrain}
  \centering
  \setlength{\tabcolsep}{8pt}
  \begin{tabular}{rccc}
    \toprule
    & \multicolumn{3}{c}{Test Acc. (\%)} \\
    \cmidrule(lr){2-4} Model & ImageNet-1k & Places365 & iNaturalist \\
    \midrule
        ViT-S & 53.64 & 44.63 & 52.82 \\
        uViT-S & 67.11 & 50.25 & 66.81 \\
    \midrule
        ViT-B & 74.22 & 53.37 & 76.28 \\
        uViT-B & 74.90 & 54.45 & 78.26 \\
    \midrule
        ViT-L & 77.33 & 54.75 & 85.39$^\dagger$ \\
        uViT-L & 77.18 & 55.14 & 84.49$^\dagger$ \\
    \bottomrule
  \end{tabular}
\end{table}

\cref{sec:integrated} shows that the AttenFeed module in uViT relaxes the Attention--FFN dichotomy in ViT, and \cref{exp:transfer} demonstrates that this leads to reduced inductive bias compared to ViT. Taken together, these findings indicate that uViT mitigates the inductive bias inherent in the Attention--FFN dichotomy.
Leveraging uViT as a control group for investigating the role of the Attention--FFN separation, we compare the pre-training performance of ViT and uViT to examine under what conditions the Attention--FFN dichotomy is beneficial. Our results show that the benefit of the Attention--FFN dichotomy in ViT diminishes as the model size decreases.

\paragraph{Settings.}
We pre-train the uViT-S/B/L and ViT-S/B/L on three larger-scale datasets: ImageNet-1k, Places365~\cite{places365}, and iNaturalist 2021~\cite{inaturalist2021}. The training procedure for all experiments is identical to the setup described in \cref{exp:attention}.

\paragraph{Results.}
The experimental results are presented in \cref{tab:pretrain}. When the model size is Small, uViT significantly outperforms ViT. However, as the model scales from Small to Base, this performance gap diminishes substantially. Since ViT and uViT were trained under identical conditions, these performance differences arise from the structural dichotomy of Attention--FFN. Therefore, we interpret this diminishing performance gap as an indication that the Attention--FFN dichotomy in ViT becomes counterproductive at a smaller model scales.

\paragraph{Hypothesis on the Result.}
We attribute the disadvantage of the Attention--FFN dichotomy at smaller scales to the rigid parameter allocation of ViT. Under the limited parameter budgets of small models, allocating parameters efficiently becomes critical. However, the strict separation between Attention and FFN forces parameters into fixed functional roles, which can lead to structural inefficiencies. For example, the standard architecture cannot dynamically allocate more capacity toward attention due to the fixed parameter budget assigned to the FFN, even when modeling token relationships is crucial for the dataset. In contrast, uViT provides architectural flexibility that allows limited parameters to be utilized more adaptively.

\begin{center}
\begin{tcolorbox}[
    width=0.48\textwidth,
    colframe=darkgray,
    colback=orange!3,
    boxrule=1pt,
    arc=1mm,
    boxsep=2pt
]
\textbf{Finding of \cref{exp:scaling}}: Attention--FFN dichotomy in ViT becomes detrimental at a smaller model.
\end{tcolorbox}
\end{center}
\section{Conclusion}
In this work, we revisit one of the most fundamental yet under-examined design choices in Vision Transformers: the rigid alternation between Attention and Feed-Forward Networks (FFNs). To analyze the necessity of this Attention--FFN dichotomy, we propose the AttenFeed module, a unified component that mathematically and empirically generalizes both standard attention and FFNs. Building upon this formulation, we introduce the unified Vision Transformer (uViT), which replaces the alternating structure with a sequence of AttenFeed modules, thereby relaxing the structural inductive bias imposed by explicit functional separation. Through theoretical analysis and extensive experiments, we demonstrate that the conventional Attention--FFN dichotomy acts as a restrictive inductive bias at smaller model scales.
The limitations of our study are discussed in \cref{sec:limitation}.

\newpage
\section*{Impact Statement}




The models we analyze are pre-trained on large-scale image datasets (ImageNet-1k, Places365, iNaturalist-2021) known to contain geographic, taxonomic, and demographic imbalances; any downstream use of the resulting representations inherits these biases and should be subject to the fairness and robustness practices already established for pre-trained vision backbones. Beyond this, we are not aware of specific societal consequences of this work that require additional discussion.


\bibliography{icml2026}
\bibliographystyle{icml2026}

\newpage
\appendix
\onecolumn

\clearpage
\section{Mathematical Descriptions}
\label{appendix:math_description}

We now demonstrate the theoretically generalized formulation of AttenFeed modules encompassing both standard attention and FFN components, thus minimizing the inductive biases.

\subsection{FFN-like Behavior.} First, we examine the asymptotic behavior of the module when the attention mechanism becomes highly localized. We demonstrate that when each token attends exclusively to itself, the spatial mixing vanishes, and the module collapses into a standard, position-wise FFN.
\begin{proposition}
If the {\small $h$}-th AF head at layer {\small $\ell$} is self-focused, then the head's output approximates a position-wise FFN: {\small $\text{AF}^{\ell, h}(\mathbf{X}^{\ell-1})_i \approx \text{FFN}(\mathbf{x}_i^{\ell-1})$}.
\end{proposition}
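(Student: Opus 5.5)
We first make precise what \emph{self-focused} means, since the statement leaves it informal. The natural choice is that the attention weights of the head concentrate on the diagonal: $a_{i,i}^{\ell,h} \ge 1-\epsilon$ for every $i$, with some small $\epsilon \ge 0$. Because the rows of the softmax attention matrix sum to one, this gives $\sum_{j\neq i} a_{i,j}^{\ell,h} \le \epsilon$. The exact case $\epsilon=0$ is the limit $a_{i,j}^{\ell,h}=\delta_{ij}$. This limit is reached, for instance, when the QK logits satisfy $\mathbf{q}_i\mathbf{k}_i^\top - \max_{j\ne i}\mathbf{q}_i\mathbf{k}_j^\top \to \infty$. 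The plan is to prove an explicit bound of order $\epsilon$ on the deviation from an FFN, and then read off the approximation.

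The key step is to split the sum in \cref{eqn:af} into its diagonal and off-diagonal parts:
\begin{equation*}
\text{AF}^{\ell,h}(\mathbf{X}^{\ell-1})_i = a_{i,i}^{\ell,h}\,\sigma(\mathbf{x}_i^{\ell-1}\mathbf{W}_V^{\ell,h})\mathbf{W}_O^{\ell,h} + \sum_{j\neq i} a_{i,j}^{\ell,h}\,\sigma(\mathbf{x}_j^{\ell-1}\mathbf{W}_V^{\ell,h})\mathbf{W}_O^{\ell,h}.
\end{equation*}
We then identify $\mathbf{W}_\text{in}\coloneqq\mathbf{W}_V^{\ell,h}\in\mathbb{R}^{d\times d_h}$ and $\mathbf{W}_\text{out}\coloneqq\mathbf{W}_O^{\ell,h}\in\mathbb{R}^{d_h\times d}$, as suggested in \cref{sec:method_af}. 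With this choice, the first term equals $a_{i,i}^{\ell,h}\,\text{FFN}(\mathbf{x}_i^{\ell-1})$, where the FFN has width $d_\text{FFN}=d_h$. Subtracting $\text{FFN}(\mathbf{x}_i^{\ell-1})$ leaves two error terms:
\begin{itemize}
\item $(a_{i,i}^{\ell,h}-1)\,\text{FFN}(\mathbf{x}_i^{\ell-1})$;
\item the off-diagonal sum above.
\end{itemize}
By the triangle inequality, both are bounded by $\epsilon\cdot\max_j\|\sigma(\mathbf{x}_j^{\ell-1}\mathbf{W}_V^{\ell,h})\mathbf{W}_O^{\ell,h}\|$. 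Hence
\begin{equation*}
\bigl\|\text{AF}^{\ell,h}(\mathbf{X}^{\ell-1})_i-\text{FFN}(\mathbf{x}_i^{\ell-1})\bigr\| \le 2\epsilon \max_j \bigl\|\sigma(\mathbf{x}_j^{\ell-1}\mathbf{W}_V^{\ell,h})\bigr\|\,\bigl\|\mathbf{W}_O^{\ell,h}\bigr\|.
\end{equation*}
This bound vanishes as $\epsilon\to0$. When $\epsilon=0$ the two sides are equal exactly.

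The main obstacle is making the right-hand side uniformly bounded, so that "$\approx$" is meaningful. For this we use the fact that the head sees $\text{LN}^\ell(\mathbf{X}^{\ell-1})$, as in \cref{eqn:utransformer}.
\begin{itemize}
\item After layer normalization, each token has norm at most $\sqrt{d}$, up to the affine gain.
\item GELU satisfies $|\sigma(z)|\le|z|$.
\item Together these give $\|\sigma(\mathbf{x}_j\mathbf{W}_V)\|\le\sqrt{d}\,\|\mathbf{W}_V\|$, a bound independent of the token.
\end{itemize}
This yields an $O(\epsilon)$ error in terms of weight norms alone. If one prefers not to invoke LN, the same bound holds on any bounded input set.

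Finally, we remark on width. A single head realizes an FFN of width $d_h$. Summing over heads, which share the same self-focused pattern, concatenates the hidden units. The full layer therefore approximates an FFN of width $H d_h$, which matches the role of the FFN in the standard block.
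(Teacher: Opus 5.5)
Your proof is correct and follows the same route as the paper's. Like the paper, you split the sum in \cref{eqn:af} into the diagonal term and the off-diagonal terms, use self-focus ($a_{i,i}^{\ell,h}\to 1$, $a_{i,j}^{\ell,h}\to 0$ for $j\neq i$) to remove the latter, and identify $\sigma(\mathbf{x}_i^{\ell-1}\mathbf{W}_V^{\ell,h})\mathbf{W}_O^{\ell,h}$ as an FFN with $\mathbf{W}_\text{in}=\mathbf{W}_V^{\ell,h}$ and $\mathbf{W}_\text{out}=\mathbf{W}_O^{\ell,h}$. The one difference is that the paper only substitutes the limiting attention values, while your explicit $2\epsilon$ bound makes the ``$\approx$'' quantitative. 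The LN-based uniformity step is optional, since for any fixed input the maximum over $j$ is already finite.
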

\begin{proof}
First, we must show that AttenFeed module follows the condition of self-focus, such that {\small $a_{i,i}^{\ell, h} \to 1$} and $a_{i,j}^{\ell, h} \to 0$ for all {\small $j \neq i$}. The possibility of this condition is empirically established by various studies\cite{bertattention, no-op, deepvit, zebra, dominance} on Transformer models called diagonal dominance, and as AttenFeed module is based on Transformer, the condition is given.
Recall the definition of our module for a single token {\small $i$} as the sum of attention-weighted activation units:
\begin{equation}
\small
    \text{AF}^{\ell, h}(\mathbf{X}^{\ell-1})_i = \sum_{j} a_{i,j}^{\ell, h} \text{GELU}\left( \mathbf{x}_j^{\ell -1} \mathbf{W}_V^{\ell, h} \right) \mathbf{W}_O^{\ell, h}.
\end{equation}
We decompose this summation into the diagonal term ({\small $j=i$}) and the off-diagonal terms ({\small $j \neq i$}):
{\small
\begin{align}
    \text{AF}^{\ell, h}(\mathbf{X}^{\ell-1})_i &= \underbrace{a_{i,i}^{\ell, h} \text{GELU}\left(\mathbf{x}_i^{\ell -1} \mathbf{W}_V^{\ell, h}\right) \mathbf{W}_O^{\ell, h}}_{\text{Diagonal Term}} \\
    &~~~~+ \underbrace{\sum_{j \neq i} a_{i,j}^{\ell, h} \text{GELU}\left(\mathbf{x}_j^{\ell -1} \mathbf{W}_V^{\ell, h}\right) \mathbf{W}_O^{\ell, h}}_{\text{Off-diagonal Term}}.
\end{align}
}
Applying the diagonal dominance condition where {\small $a_{i,i}^{\ell, h} \approx 1$} and {\small $a_{i,j}^{\ell, h} \approx 0$} for {\small $j \neq i$}, the cross-term vanishes:
{\small
\begin{align}
    \text{AF}^{\ell, h}(\mathbf{X}^{\ell-1})_i &\approx 1 \cdot \text{GELU}\left(\mathbf{x}_i^{\ell -1} \mathbf{W}_V^{\ell, h}\right) \mathbf{W}_O^{\ell, h} + \sum_{j \neq i} 0 \cdot (\dots) \\
    &= \text{GELU}\left(\mathbf{x}_i^{\ell -1} \mathbf{W}_V^{\ell, h}\right) \mathbf{W}_O^{\ell, h}.
\end{align}
}
This resulting expression is exactly the definition of a standard, token-wise GELU FFN applied to {\small $\mathbf{x}_i^{\ell-1}$}, proving that the AttenFeed module naturally reduces to a standard FFN structure.
\end{proof}

\subsection{Attention-like Behavior.} To mathematically justify our module's behavior in reproducing the standard attention module, we utilize the argument that an FFN layer with SiLU activation is mathematically identical to the attention head of the standard module~\cite{math}. For more details on this, refer to Theorem 4 of the cited study. Our argument can be summed up with the following proposition.

\begin{proposition}
Assume the pre-activation inputs to the GELU function are sufficiently small, such that {\small $\| \mathbf{x}_j^{\ell-1} \mathbf{W}_V^{\ell, h} \| \to 0$} for all {\small $j$}. Under this condition, the AF head approximates a standard attention head: {\small $\text{AF}^{\ell, h}(\mathbf{X}^{\ell-1}) \approx \text{Attn}^{\ell,h}(\mathbf{X}^{\ell-1})$}.
\end{proposition}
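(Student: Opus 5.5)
The plan is to linearize GELU around the origin and then absorb the resulting constant factor into the output projection. Recall $\text{GELU}(z) = z\,\Phi(z)$, where $\Phi$ is the standard normal CDF. Since $\Phi(0)=\tfrac12$ and $\Phi$ is smooth, a first-order Taylor expansion gives, entrywise,
\begin{equation*}
\text{GELU}(z) = \tfrac12 z + \phi(0) z^2 + O(z^3), \qquad \phi(0) = \tfrac{1}{\sqrt{2\pi}}.
\end{equation*}
Hence $\text{GELU}(\mathbf{u}) = \tfrac12 \mathbf{u} + \mathbf{r}(\mathbf{u})$ for a vector $\mathbf{u}\in\mathbb{R}^{d_h}$, with a remainder satisfying $\|\mathbf{r}(\mathbf{u})\| \le C\|\mathbf{u}\|^2$ whenever $\|\mathbf{u}\|\le 1$. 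Here $C$ is an absolute constant, since $|\text{GELU}(z)-z/2| \le c z^2$ on $[-1,1]$.

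First, substitute $\mathbf{u}_j = \mathbf{x}_j^{\ell-1}\mathbf{W}_V^{\ell,h}$ into \cref{eqn:af}. This gives
\begin{equation*}
\text{AttenFeed}^{\ell,h}(\mathbf{X}^{\ell-1})_i = \tfrac12 \sum_j a_{i,j}^{\ell,h}\, \mathbf{x}_j^{\ell-1}\mathbf{W}_V^{\ell,h}\mathbf{W}_O^{\ell,h} + \sum_j a_{i,j}^{\ell,h}\, \mathbf{r}(\mathbf{u}_j)\mathbf{W}_O^{\ell,h}.
\end{equation*}
The first sum is exactly $\tfrac12\,\text{Attn}^{\ell,h}(\mathbf{X}^{\ell-1})_i$ from \cref{eqn:attention}, computed with the same QK circuit.

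Next, bound the second sum. Because $a_{i,j}^{\ell,h}\ge 0$ and $\sum_j a_{i,j}^{\ell,h}=1$, the triangle inequality gives
\begin{equation*}
\Bigl\|\sum_j a_{i,j}^{\ell,h}\, \mathbf{r}(\mathbf{u}_j)\mathbf{W}_O^{\ell,h}\Bigr\| \le C\,\|\mathbf{W}_O^{\ell,h}\|\,\max_j\|\mathbf{u}_j\|^2 .
\end{equation*}
This bound is uniform in $i$ and does not depend on the sequence length.

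Finally, handle the factor $\tfrac12$ by reparameterizing $\mathbf{W}_O^{\ell,h}\mapsto 2\mathbf{W}_O^{\ell,h}$. Equivalently, the AttenFeed head matches a standard head whose output matrix is $\tfrac12\mathbf{W}_O$; this reparameterization is free because $\mathbf{W}_O$ is a learned parameter.

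The main obstacle is stating ``$\approx$'' in a non-vacuous way. As $\|\mathbf{u}_j\|\to 0$, both the attention output and the error vanish, so an absolute bound alone proves nothing. I would therefore state the approximation in relative terms. The attention term scales like $\epsilon = \max_j\|\mathbf{u}_j\|$ (times $\|\mathbf{W}_O\|$), while the error scales like $\epsilon^2\|\mathbf{W}_O\|$, so the relative error is $O(\epsilon)$ and tends to zero.

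To make this precise, I would scale the value path as $\mathbf{W}_V = \epsilon\widetilde{\mathbf{W}}_V$ and $\mathbf{W}_O = \epsilon^{-1}\widetilde{\mathbf{W}}_O$. Under this scaling the rescaled AttenFeed head equals $\tfrac12\text{Attn}$ (with $\widetilde{\mathbf{W}}_V\widetilde{\mathbf{W}}_O$) plus $O(\epsilon)$, which converges to the attention head as $\epsilon\to0$. One caveat must be recorded: the relative bound can fail when the attention output itself is nearly cancelled, since the weighted average $\sum_j a_{i,j}\mathbf{u}_j$ may be much smaller than $\max_j\|\mathbf{u}_j\|$. This is why I would state the conclusion as the absolute $O(\epsilon)$ bound under the scaling above. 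This is consistent with, and can be cross-referenced to, the SiLU-attention equivalence of \cite{math}.
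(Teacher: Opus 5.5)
Your argument is correct, but it takes a different route from the paper. The paper does not Taylor-expand at the origin. Instead, it cites empirical activation-sparsity results to split the entries of $\mathbf{z}_j=\mathbf{x}_j^{\ell-1}\mathbf{W}_V^{\ell,h}$ into two regimes. In a sparse large-positive regime, $\text{GELU}(z)$ approaches its linear asymptote $z$. In a dense near-zero regime, both $\text{GELU}(z)$ and $z$ are $\approx 0$. Assuming large negative pre-activations are suppressed, the paper then replaces GELU by the identity entrywise and concludes $\text{AF}^{\ell,h}\approx\text{Attn}^{\ell,h}$ with the \emph{same} $\mathbf{W}_V,\mathbf{W}_O$.

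Your proof combines three ingredients: the linearization $\text{GELU}(z)=\tfrac12 z+O(z^2)$, the convex-combination bound on the remainder, and the scale-invariant reparameterization $(\epsilon\widetilde{\mathbf{W}}_V,\epsilon^{-1}\widetilde{\mathbf{W}}_O)$. What this buys you is a quantitative, sequence-length-independent $O(\epsilon)$ error that actually uses the stated hypothesis $\|\mathbf{x}_j^{\ell-1}\mathbf{W}_V^{\ell,h}\|\to 0$. It also shows that in this regime the slope is $\tfrac12$, not $1$. Consequently, the match with identical weights holds only in the vacuous absolute sense you identified, and a genuine approximation requires halving $\mathbf{W}_O$ on the attention side (equivalently, doubling it on the AttenFeed side).

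The paper's regime split buys an identity approximation with no rescaling. However, it is heuristic, and its key regime ($z\gg 0$) is excluded by the proposition's own hypothesis. In the near-zero regime, $\text{GELU}(z)\approx z$ holds only in absolute error.

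One small expository fix: under your $\epsilon$-scaling, the head converges to $\tfrac12\,\text{Attn}$ built from $(\widetilde{\mathbf{W}}_V,\widetilde{\mathbf{W}}_O)$. Take $\mathbf{W}_O=2\epsilon^{-1}\widetilde{\mathbf{W}}_O$ explicitly if you want the limit to be the attention head itself.
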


\begin{proof}
Let the pre-activation vector be {\small $\mathbf{z}_{j} = \mathbf{x}_j^{\ell-1} \mathbf{W}_V^{\ell, h}$}. Recent studies on activation sparsity demonstrate that modern architectures exhibit highly sparse, input-dependent activation patterns even with non-strict activation functions like GELU \cite{xwvzero1, xwvzero2, xwvzero3, xwvzero4, xwvzero5}. Consequently, the components of {\small $\mathbf{z}_{j}$} can be partitioned into two distinct functional regimes: a sparse subset of large positive activations and a dense majority of near-zero values. We analyze the behavior of the GELU function across these two regimes:
\begin{itemize}
\item[$\circ$] \textbf{Sparse active regime ({\small $z \gg 0$}):} The function approaches its linear asymptote, {\small $\text{GELU}(z) \approx z$}~\cite{gelustudy}. These few large values dominantly drive the output of the layer. The massive activation phenomenon also falls into this category~\cite{massiveactivation}.
\item[$\circ$] \textbf{Dense inactive regime ({\small $z \approx 0$}):} Following recent observations on activation sparsity~\cite{sparsegpt}, the activation values in this region are sufficiently small and inherently negligible. Thus, we can directly approximate the function as near-zero, i.e., {\small $\text{GELU}(z) \approx 0$}.
\end{itemize}
For the inactive regime, since both {\small $z$} and {\small $\text{GELU}(z)$} are approximately zero, replacing {\small $\text{GELU}(z)$} with {\small $z$} introduces near-zero absolute error. Assuming large negative activations are effectively suppressed by the sparsity pattern, we can approximate the activation function as an identity mapping over the entire vector without significant loss of accuracy:
\begin{equation}
\small
    \text{GELU}(\mathbf{z}_{j}) \approx \mathbf{z}_{j}.
\end{equation}
Substituting this linear approximation back into the AttenFeed module formulation yields:
\begin{equation}
\small
    \text{AF}^{\ell, h}(\mathbf{X}^{\ell-1})_i \approx \sum_{j} a_{i,j}^{\ell, h} \left( \mathbf{x}_j^{\ell -1} \mathbf{W}_V^{\ell, h} \right) \mathbf{W}_O^{\ell, h} = \text{Attn}^{\ell, h}(\mathbf{X}^{\ell-1}).
\end{equation}
This resulting expression is mathematically identical to the formal definition of the standard attention mechanism given in Equation \ref{eqn:attention}. Thus, driven by the properties of activation sparsity, the AttenFeed module structurally converges to standard attention. \qed
\end{proof}

\begin{proposition}
Let an AF head $\text{AF}^{\ell, h}$ follow the conditions of \cref{prop1}. Then, there exists a set of parameterizations for the AF head such that {\small $\text{AF}^{\ell,h}(\mathbf{X}^{\ell-1}) \approx \text{Attn}^{\ell,h}(\mathbf{X}^{\ell-1})$}.
\end{proposition}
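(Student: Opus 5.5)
The plan is to reduce both sides of the claimed approximation to position-wise expressions using the self-focus condition, and then match those expressions with an explicit choice of value and output matrices. I will write $\mathbf{W}_V,\mathbf{W}_O$ for the target attention head's OV circuit and $\widetilde{\mathbf{W}}_V,\widetilde{\mathbf{W}}_O$ for the AF head's parameters. The AF head and the target head share the same QK circuit, so they have the same weights $a_{i,j}^{\ell,h}$.

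\emph{Step 1 (reduce the AF head).} By \cref{prop1}, self-focus ($a_{i,i}^{\ell,h}\to 1$, $a_{i,j}^{\ell,h}\to 0$ for $j\neq i$) gives $\text{AF}^{\ell,h}(\mathbf{X}^{\ell-1})_i \approx \text{GELU}(\mathbf{x}_i^{\ell-1}\widetilde{\mathbf{W}}_V)\widetilde{\mathbf{W}}_O$.

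\emph{Step 2 (reduce the target head).} The same diagonal/off-diagonal split applied to \cref{eqn:attention} gives $\text{Attn}^{\ell,h}(\mathbf{X}^{\ell-1})_i \approx \mathbf{x}_i^{\ell-1}\mathbf{W}_V\mathbf{W}_O$. The problem is now to realize an arbitrary linear map $\mathbf{x}\mapsto \mathbf{x}\mathbf{W}_V\mathbf{W}_O$ by a GELU FFN. Unlike \cref{prop2}, no smallness assumption on the pre-activations should be needed.

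\emph{Step 3 (key construction).} I will use the exact identity
\begin{equation*}
\text{GELU}(z)-\text{GELU}(-z) = z\Phi(z)+z\Phi(-z) = z ,
\end{equation*}
which holds because $\Phi(z)+\Phi(-z)=1$. Accordingly, I set
\begin{equation*}
\widetilde{\mathbf{W}}_V=\begin{bmatrix}\mathbf{W}_V & -\mathbf{W}_V\end{bmatrix}\in\mathbb{R}^{d\times 2d_h},
\qquad
\widetilde{\mathbf{W}}_O=\begin{bmatrix}\mathbf{W}_O\\ -\mathbf{W}_O\end{bmatrix}\in\mathbb{R}^{2d_h\times d}.
\end{equation*}
With this choice $\text{GELU}(\mathbf{x}\widetilde{\mathbf{W}}_V)\widetilde{\mathbf{W}}_O=\mathbf{x}\mathbf{W}_V\mathbf{W}_O$ exactly, for every input.

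If the head width must stay at $d_h$, there is a fallback: split the $d_h$ value channels into $\pm$ pairs, so the target OV circuit has rank at most $d_h/2$. Alternatively, one can invoke Theorem 4 of \cite{math} after approximating $\text{GELU}(z)\approx \tfrac{1}{1.702}\text{SiLU}(1.702z)$ and absorbing the scale factors into the weights. That route carries a uniform error of about $0.02\|\widetilde{\mathbf{W}}_O\|$.

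\emph{Step 4 (error bookkeeping).} Let $\epsilon_i=\sum_{j\neq i}a_{i,j}^{\ell,h}=1-a_{i,i}^{\ell,h}$. Because GELU minus its negated copy is the identity, under the construction of Step 3 the AF head equals the target head exactly: $\text{AF}^{\ell,h}(\mathbf{X}^{\ell-1})_i=\sum_j a_{i,j}^{\ell,h}\,\mathbf{x}_j^{\ell-1}\mathbf{W}_V\mathbf{W}_O=\text{Attn}^{\ell,h}(\mathbf{X}^{\ell-1})_i$, with no error term at all. So once Step 3 is in place, self-focus is only needed to motivate the reduction, and the statement holds with equality.

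The main obstacle is interpreting the statement's "set of parameterizations." If the head dimension is fixed, or the target attention head is not taken to share the AF head's QK circuit, the exact trick fails. One must then fall back on the Step 2 reduction, which needs self-focus of the target head as well, plus the SiLU/Theorem 4 route. In that case the error is controlled by $\epsilon_i\max_j\|\mathbf{x}_j^{\ell-1}\|$ times the operator norms of the weights, together with the GELU--SiLU gap. I would present the exact doubled-width construction first and state the fixed-width variant as a remark.
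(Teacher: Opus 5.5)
Your proof is correct, and it takes a genuinely different route from the paper. The paper uses self-focus (\cref{prop1}) to reduce the head to the GELU FFN $\text{GELU}(\mathbf{x}_i^{\ell-1}\mathbf{W}_V^{\ell,h})\mathbf{W}_O^{\ell,h}$. It then replaces GELU by a rescaled SiLU via the minimax constant $\lambda^*\approx 1.702$, and invokes Theorem~4 of \cite{math}, which rewrites a SiLU FFN as a masked attention. The head thereby becomes a sum of $d_h$ rank-one heads with weights $a_{i,j}^{\ell,h,k}$ supplied by that theorem, and this sum is what the paper calls $\text{Attn}^{\ell,h}$. That route keeps the width $d_h$ and only reinterprets the existing parameters, so it speaks about every self-focused AF head rather than a specially built one. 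However, it is only approximate (because of the GELU--SiLU gap), it needs self-focus to meet the theorem's masking hypothesis, and the attention pattern it produces is not the one given by the head's own QK circuit. Your odd-part identity $\text{GELU}(z)-\text{GELU}(-z)=z$ with sign-paired weights instead gives exact equality with the standard head that shares the AF head's weights $a_{i,j}^{\ell,h}$, for every attention pattern. So neither the self-focus hypothesis nor the external theorem is needed. The price is doubling the value width, or, at fixed width, restricting the target OV circuit to rank at most $d_h/2$. Your scaling $\text{GELU}(z)\approx\tfrac{1}{\lambda^*}\text{SiLU}(\lambda^* z)$ is also the correct one. Putting $\lambda^*$ only on $\mathbf{W}_O$, as the paper does, gives $\lambda^* z\,\sigma(z)$ rather than $z\,\sigma(\lambda^* z)\approx z\,\Phi(z)$.

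One correction to your fixed-width remark: the Theorem~4 route does not approximate a \emph{given} full-rank map $\mathbf{x}\mapsto\mathbf{x}\mathbf{W}_V\mathbf{W}_O$. It only exhibits \emph{some} attention of a special masked form, so it cannot serve as your fallback for a prescribed target head. If you want the prescribed head at full width $d_h$, take $\widetilde{\mathbf{W}}_V=\varepsilon\mathbf{W}_V$ and $\widetilde{\mathbf{W}}_O=(2/\varepsilon)\mathbf{W}_O$. Since $(2/\varepsilon)\,\text{GELU}(\varepsilon z)=z+O(\varepsilon z^2)$, the AF head converges to $\text{Attn}^{\ell,h}$ uniformly on bounded inputs as $\varepsilon\to 0$, again with no self-focus assumption. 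Note the factor $2$, which is needed because $\text{GELU}'(0)=\tfrac12$. Likewise, if the target head has a different but also self-focused QK circuit, your Step~2 reduction combined with the Step~3 identity already suffices, and the SiLU route is not needed there either.
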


\begin{proof}
From \cref{prop1}, we can redefine AttenFeed modules as FFN with GELU activation. To utilize the mentioned argument, we must justify the cases where GELU function approximates SiLU function. This can be done with a simple minimax approximation. We define the native Sigmoid gate as $\sigma$ and establish the following minimax optimization problem to find the optimal scaling factor {\small $\lambda^*$} that maps the Gaussian CDF {\small $\Phi$} to the standard Sigmoid:
\begin{equation}
\small
    \lambda^* = \underset{\lambda}{\arg \min} \left( \max_{x \in \mathbb{R}} | \Phi(\lambda x) - \sigma(x) | \right).
\end{equation}
Based on the minimax theorem, such {\small $\lambda^*$} exists (numerically, {\small $\lambda^* \approx 1.702$}~\cite{lambda-asterisk}) and can be parameterized into AttenFeed modules from the perspective of FFN. Without loss of generality, we define {\small $\mathbf{W}_{\text{out}}^{\ell, h, k} \coloneq \lambda^* \mathbf{W}_O^{\ell, h}[k, :] \in \mathbb{R}^{d}$} and {\small $\mathbf{W}_{\text{in}}^{\ell, h, k} \coloneq \mathbf{W}_V^{\ell, h}[:, k] \in \mathbb{R}^{d}$} where {\small $k \in \{1,\ldots,d_h\}$}, and rewrite the AttenFeed module as follows: 
{\small
\begin{align}
    \text{AF}^{\ell, h}(\mathbf{X}^{\ell-1})_i &  = \text{GELU}\left(\mathbf{x}_i^{\ell -1} \mathbf{W}_V^{\ell, h}\right) \mathbf{W}_O^{\ell, h} \\ &
    \approx \sum^{d_h}_{k=1}\text{SiLU}\left(\mathbf{x}_i^{\ell -1} \mathbf{W}_\text{in}^{\ell, h, k}\right) \mathbf{W}_\text{out}^{\ell, h, k}.
\end{align}
}
Under the assumption of diagonal dominance, the conditions of Theorem 4 in \cite{math} are satisfied. Thus, the following holds:\footnote{Concretely, Theorem~4 in \cite{math} assumes a valid masked attention mechanism where each row contains at least one mask. Under our assumption of the diagonal dominance property, our framework naturally satisfies the conditions required by the theorem.}
\begin{equation}
\small
    \text{SiLU}\left(\mathbf{x}_i^{\ell -1} \mathbf{W}_\text{in}^{\ell, h, k}\right) \mathbf{W}_\text{out}^{\ell, h, k} = \sum_{j} a_{i,j}^{\ell, h, k} \mathbf{x}_j^{\ell -1} \mathbf{W}_{\text{in}}^{\ell, h, k}\mathbf{W}_\text{out}^{\ell, h, k}.
\end{equation}
This leads to the conclusion that the head of AttenFeed modules act as a standard attention head modules that has {\small $d_h$} heads. This can be concluded with the following equation:
{\small
\begin{align}
    \text{AF}^{\ell, h}(\mathbf{X}^{\ell-1})_i &= \sum_k\sum_{j} a_{i,j}^{\ell, h, k} \mathbf{x}_j^{\ell -1} \mathbf{W}_{\text{in}}^{\ell, h, k}\mathbf{W}_\text{out}^{\ell, h, k} \\
    &= \sum_k \text{Attn}^{\ell, h, k}(\mathbf{X}^{\ell-1}) \\
    &= \text{Attn}^{\ell, h}(\mathbf{X}^{\ell-1}).
\end{align}
}
\end{proof}

\subsection{Mixture of FFN-like Behavior.}
While AttenFeed modules can function as either standard attention or standard FFNs, they also manifest a novel emergent behavior: a spatially weighted mixture of FFNs. As shown in \cref{eqn:mixture-of-ffn}, the extent to which the non-linear FFN outputs from each token position are integrated into the residual stream is dynamically governed by the QK circuit of the attention mechanism.
{\small
    \begin{align}
    \label{eqn:mixture-of-ffn}
        \text{AF}^{\ell, h}(\mathbf{X}^{\ell-1})_i &= \sum_{j} a_{i,j}^{\ell, h} \sigma(\mathbf{x}_j^{\ell -1} \mathbf{W}_V^{\ell, h}) \mathbf{W}_O^{\ell, h} \\
        &= \sum_{j} a_{i,j}^{\ell, h} \text{FFN}(\mathbf{x}_j^{\ell -1}).
    \end{align}
}

\clearpage
\section{Training Details}
\label{appendix:exp_setting_detail}
We evaluate the proposed unified Vision Transformer (\textbf{uViT}) against the standard Vision Transformer (\textbf{ViT}) baseline at two scales:
(i) a \emph{toy-scale} regime on small classification datasets used for our rank analyses (\cref{exp:ffn}), and
(ii) a \emph{large-scale} pre-training regime on ImageNet-1k, Places365, and iNaturalist-2021.
In both regimes, every hyper-parameter that is not intrinsic to the architectural difference between ViT and uViT is held fixed, so that all observed differences can be attributed to the architectural change rather than to the optimization recipe.

\subsection{Toy-Scale Training}
\label{sec:training_toy}

\paragraph{Datasets and preprocessing.}
Toy-scale experiments are conducted on CIFAR-10, CIFAR-100, STL-10, and SVHN. All images are resized to a common spatial resolution ($32{\times}32$ for CIFAR-10/100 and SVHN, $96{\times}96$ for STL-10) and patchified with a patch size of $4$.
Training images are augmented with a random crop (padding $4$ for CIFAR-10/100, STL-10 and padding $2$ for SVHN), random horizontal flip (except SVHN, where flipping alters semantics), and RandAugment~\cite{randaug} (CIFAR-10/100, STL-10 only).
Evaluation images are only resized and normalized using each dataset's mean and standard deviation. Identical augmentation pipelines are applied to ViT and uViT.

\paragraph{Optimization.}
All toy-scale models are trained for $100$ epochs with a global batch size of $128$ on a single GPU.
We use the AdamW~\cite{adamw} optimizer with a peak learning rate of $5\!\times\!10^{-4}$ and weight decay $10^{-3}$, warmed up linearly from $\mathrm{peak}/W$ to the peak over $W=10$ epochs and then decayed by a cosine annealing schedule to $10^{-5}$ over the remaining $190$ epochs.
Loss is cross-entropy over hard labels. Dropout is fixed at $0.1$ throughout.
Random seeds, data splits, dataloader ordering, optimizer, learning-rate schedule, dropout, and augmentation are identical across ViT, uViT, and noFFN within each dataset, ensuring that only the architectural choice varies between the compared models.

\subsection{Large-Scale Pre-training}
\label{sec:training_large}

\paragraph{Datasets and preprocessing.}
Large-scale pre-training is performed on ImageNet-1k, Places365-standard, and iNaturalist-2021 at an input resolution of $224{\times}224$ with a patch size of $16$. For training, we apply a standard strong-augmentation pipeline: \textsc{RandomResizedCrop}$(224,\,\text{scale}=(0.08,1.0))$, random horizontal flip with probability $0.5$, and RandAugment with magnitude $9$, followed by normalization with the ImageNet mean and standard deviation.
At the collate stage, each mini-batch is randomly transformed by either MixUp ($\alpha_\text{mixup}=1.0$)~\cite{mixup} or CutMix ($\alpha_\text{cutmix}=1.0$)~\cite{cutmix}, yielding soft labels. Evaluation uses a single center crop: images are first resized to $256$ pixels (short-side) and then center-cropped to $224{\times}224$. The \emph{same}  training and evaluation pipeline is used for ViT and uViT on every dataset.

\paragraph{Optimization.}
Both ViT and uViT are trained with the AdamW optimizer for $300$ epochs.
We adopt a reference recipe of peak learning rate $3\!\times\!10^{-3}$ at a reference global batch of $4096$, and use the linear scaling rule~\cite{goyal2017accurate}:
{\small
    \begin{align}
        \mathrm{lr} \;=\; 3\!\times\!10^{-3} \cdot \tfrac{B_{\text{global}}}{4096},
    \end{align}
}
to adapt to the actual global batch size {\small $B_{\text{global}}$}. A linear warm-up over the first $10{,}000$ optimization steps is followed by a cosine decay to zero over the remainder of training.
Weight decay is $0.3$, gradients are clipped to a global $\ell_2$ norm of $1.0$, and training is conducted in mixed precision (FP16 autocast with a dynamic loss scaler).
The loss is cross-entropy against the soft targets produced by MixUp/CutMix.

\paragraph{Distributed training.}
All large-scale runs use PyTorch \textsc{DistributedDataParallel} over a single multi-GPU node with the NCCL backend, launched via \texttt{torchrun}. A \texttt{DistributedSampler} ensures that every GPU sees a disjoint shard of the training set each epoch. Random seeds are offset by the process rank so that augmentation is decorrelated across GPUs while remaining reproducible. Per-GPU batch sizes are chosen so that $B_{\text{global}}$ is identical between matched ViT/uViT runs, and the linear scaling rule above then yields the same effective learning-rate trajectory for both models.

\clearpage
\section{Experiment Result Details}
\label{appendix:exp_result_detail}

\subsection{Rank Measurement Procedure}
\label{appendix:rank_procedure}

The experiments in \cref{exp:ffn} and \cref{appendix:rank_preservation} rely on a representation rank that is estimated from finite samples rather than from the exact matrix rank. This subsection details the mathematical procedure and clarifies the two rank quantities reported in the tables.

\paragraph{Effective rank via truncated variance.}
Let {\small $\{\mathbf{z}_i\}_{i=1}^{N} \subset \mathbb{R}^{d}$} denote the \texttt{[CLS]} token representations produced by a frozen model at a fixed layer when applied to the $N$ samples of the evaluation set, and let {\small $\mathbf{Z} \in \mathbb{R}^{N \times d}$} be their row-stacked matrix. Applying singular value decomposition (SVD) to $\mathbf{Z}$ yields non-increasing singular values {\small $\sigma_1 \geq \sigma_2 \geq \cdots \geq \sigma_k \geq 0$} with {\small $k = \min(N, d)$}. Because the total variance carried by {\small $\mathbf{Z}$} is {\small $\sum_{i=1}^{k} \sigma_i^{2}$}, the proportion of variance captured by the top-$r$ singular directions is
\begin{equation}
\label{eq:cumvar}
\eta(r) \;=\; \frac{\sum_{i=1}^{r}\sigma_{i}^{2}}{\sum_{i=1}^{k}\sigma_{i}^{2}}.
\end{equation}
Following \citet{rankmeasure1, rankmeasure2}, we define the \emph{effective rank} at a variance-explanation threshold {\small $\tau \in (0,1]$} as the smallest number of singular directions whose cumulative variance reaches {\small $\tau$}:
\begin{equation}
\label{eq:effrank}
r_{\tau}(\mathbf{Z}) \;=\; \min\bigl\{\,r \in \{1,\ldots,k\} \;:\; \eta(r) \geq \tau \,\bigr\}.
\end{equation}
This estimator is a soft analogue of the exact matrix rank that discards singular directions carrying negligible variance and is therefore robust to numerical and sampling noise. We fix {\small $\tau = 0.95$} throughout, following the convention used in prior works on rank analysis of deep networks.

\paragraph{Normalized rank.}
The raw effective rank {\small $r_{\tau}(\mathbf{Z})$} is bounded above by {\small $k = \min(N, d)$}, which depends on the feature dimension of the representation under analysis. To enable a fair comparison between representations with different feature dimensions (e.g., between the embedding features of a model and its logit outputs), we report the \emph{normalized rank}~\cite{normalizedrank}
\begin{equation}
\label{eq:normrank}
\rho_{\tau}(\mathbf{Z}) \;=\; \frac{r_{\tau}(\mathbf{Z})}{\min(N, d)} \;=\; \frac{r_{\tau}(\mathbf{Z})}{d},
\end{equation}
where the last equality uses the fact that $N \gg d$ for every evaluation set considered in this paper. The normalized rank takes values in {\small $(0, 1]$}: a value close to $1$ indicates that essentially every singular direction of the feature space carries a non-negligible amount of variance, whereas a value close to $0$ indicates that the representation has collapsed into a low-dimensional subspace.

\paragraph{Pre-head and post-head rank.}
We apply the procedure above at two distinct points of the forward pass in order to characterize both the \emph{inputs} and the \emph{outputs} of the classification head. Denoting the final LayerNorm immediately before the classifier by {\small $\mathrm{LN}(\cdot)$} and the classification head by {\small $\mathrm{Head}(\cdot)$}:
\begin{itemize}
\item[$\circ$] The \textbf{pre-head rank} is the normalized rank of the representation that \emph{enters} the classification head, i.e.\ the output of the final LayerNorm,
{\small $\mathbf{z}_{i}^{\text{pre}} = \mathrm{LN}(\mathbf{h}_{i}^{L})[\texttt{CLS}] \in \mathbb{R}^{d}$}, where {\small $\mathbf{h}_{i}^{L}$} is the last-layer token sequence for sample $i$. The feature dimension is the embedding dimension $d$, so {\small $\rho_{\tau}(\mathbf{Z}^{\text{pre}}) = r_{\tau}(\mathbf{Z}^{\text{pre}})/d$}.
\item[$\circ$] The \textbf{post-head rank} is the normalized rank of the representation that \emph{leaves} the classification head, i.e.\ the logits,
{\small $\mathbf{z}_{i}^{\text{post}} = \mathrm{Head}\bigl(\mathbf{z}_{i}^{\text{pre}}\bigr) \in \mathbb{R}^{C}$}, where $C$ is the number of classes. The feature dimension is now $C$, so {\small $\rho_{\tau}(\mathbf{Z}^{\text{post}}) = r_{\tau}(\mathbf{Z}^{\text{post}})/C$}.
\end{itemize}
The two quantities probe complementary aspects of the learned representation. The pre-head rank measures how much of the embedding space is actively used by the backbone before classification; a low pre-head rank indicates that the backbone has collapsed distinct tokens into an overly narrow subspace, so that samples from different classes become difficult to separate linearly. The post-head rank, in contrast, measures how effectively the head spreads class-discriminative information across the available logit dimensions; a post-head rank well below $1$ indicates that the logit output effectively operates on fewer than $C$ degrees of freedom, which is a direct symptom of confounded classes. Reporting both rank statistics thus allows us to localize rank collapse either to the backbone or to the classification stage.

\subsection{Rank Preservation of AttenFeed Module}
\label{appendix:rank_preservation}

In \cref{exp:ffn}, we empirically demonstrated on CIFAR-10 and CIFAR-100 that the AttenFeed module inherits the rank-preserving behavior of FFNs. To verify that this finding is not specific to a particular dataset, we repeat the same experimental protocol on two additional small-scale datasets, SVHN~\cite{svhn} and STL-10~\cite{stl10}, which differ considerably from CIFAR in visual statistics (digit crops and high-resolution natural images, respectively). For each dataset, xViT (a ViT variant in which the FFN is removed) and uViT are trained under identical conditions at three matched $(L, H, D)$ configurations, and the normalized ranks of the \texttt{[CLS]} token representations immediately preceding and following the classification head (pre-head and post-head) are measured via SVD with the 95\% variance threshold. The results are summarized in \cref{tab:ffn}.

\begin{table*}[h]
  \caption{\textbf{Performance and representation rank comparison on SVHN and STL10.} uViT consistently matches or exceeds xViT in pre-head rank and test accuracy, confirming that the rank-preserving behavior of the AttenFeed module generalizes beyond CIFAR-10/100.}
  \label{tab:ffn}
  \centering
  \setlength{\tabcolsep}{5.6pt}
  \begin{tabular}{ccccccccc}
    \toprule
    & & & \multicolumn{3}{c}{SVHN} & \multicolumn{3}{c}{STL10} \\
    \cmidrule(lr){4-6} \cmidrule(lr){7-9}
    Model & ($L$, $H$, $D$) & Params. & Test acc. & $\text{Rank}_\text{(pre-head)}$ & $\text{Rank}_\text{(post-head)}$ & Test acc. & $\text{Rank}_\text{(pre-head)}$ & $\text{Rank}_\text{(post-head)}$ \\
    \midrule
        xViT & \multirow{2}{*}{(6, 4, 128)} & \multirow{2}{*}{0.43M} & 94.70\% & 0.320 & 0.8 & 67.15\% & 0.367 & 0.6 \\
        uViT & & & 94.78\% & 0.320 & 0.8 & 69.79\% & 0.421 & 0.7  \\
        \cmidrule(lr){1-3} \cmidrule(lr){4-6} \cmidrule(lr){7-9} 
        xViT & \multirow{2}{*}{(6, 4, 160)} & \multirow{2}{*}{0.67M} & 94.87\% & 0.269 & 0.8 & 66.51\% & 0.325 & 0.7  \\
        uViT & & & 95.00\% & 0.294 & 0.8 & 69.60\% & 0.406 & 0.7  \\
        \cmidrule(lr){1-3} \cmidrule(lr){4-6} \cmidrule(lr){7-9} 
        xViT & \multirow{2}{*}{(12, 4, 128)} & \multirow{2}{*}{0.83M} & 94.85\% & 0.336 & 0.8 & 67.31\% & 0.406 & 0.7  \\
        uViT & & & 95.04\% & 0.352 & 0.8 & 70.51\% & 0.430 & 0.7  \\
    \bottomrule
  \end{tabular}
  \vspace{-5pt}
\end{table*}

Across every configuration, uViT attains a pre-head rank that is equal to or higher than that of xViT, and this higher rank is consistently accompanied by higher test accuracy. The advantage is especially pronounced on STL-10, where uViT improves the pre-head rank by up to $0.081$ and the test accuracy by up to $3.20$ percentage points over xViT. On SVHN, where both models nearly saturate due to the relative simplicity of the task, the rank and accuracy margins are smaller but still favor uViT. These observations replicate the CIFAR-10/100 trend reported in the main text and reinforce our conclusion that the GELU activation in the AttenFeed module imparts FFN-like rank-preserving properties independent of the target dataset.

\clearpage
\subsection{Attention Map Comparison of uViT and ViT}
\label{appendix:qualitative}

To complement the quantitative segmentation scores reported in \cref{exp:attention}, we present additional qualitative comparisons of the attention maps produced by ViT-B and uViT-B in the figure below. Across a wide variety of object categories and scene compositions, uViT consistently attends to the salient object regions identified by ViT, with only minor deviations in spatial extent. This qualitative agreement extends the main-text finding to a broader set of images and further supports Propositions~\ref{prop2} and \ref{prop3}.

\begin{figure*}[h]
  \centering
  \includegraphics[width=0.98\textwidth]{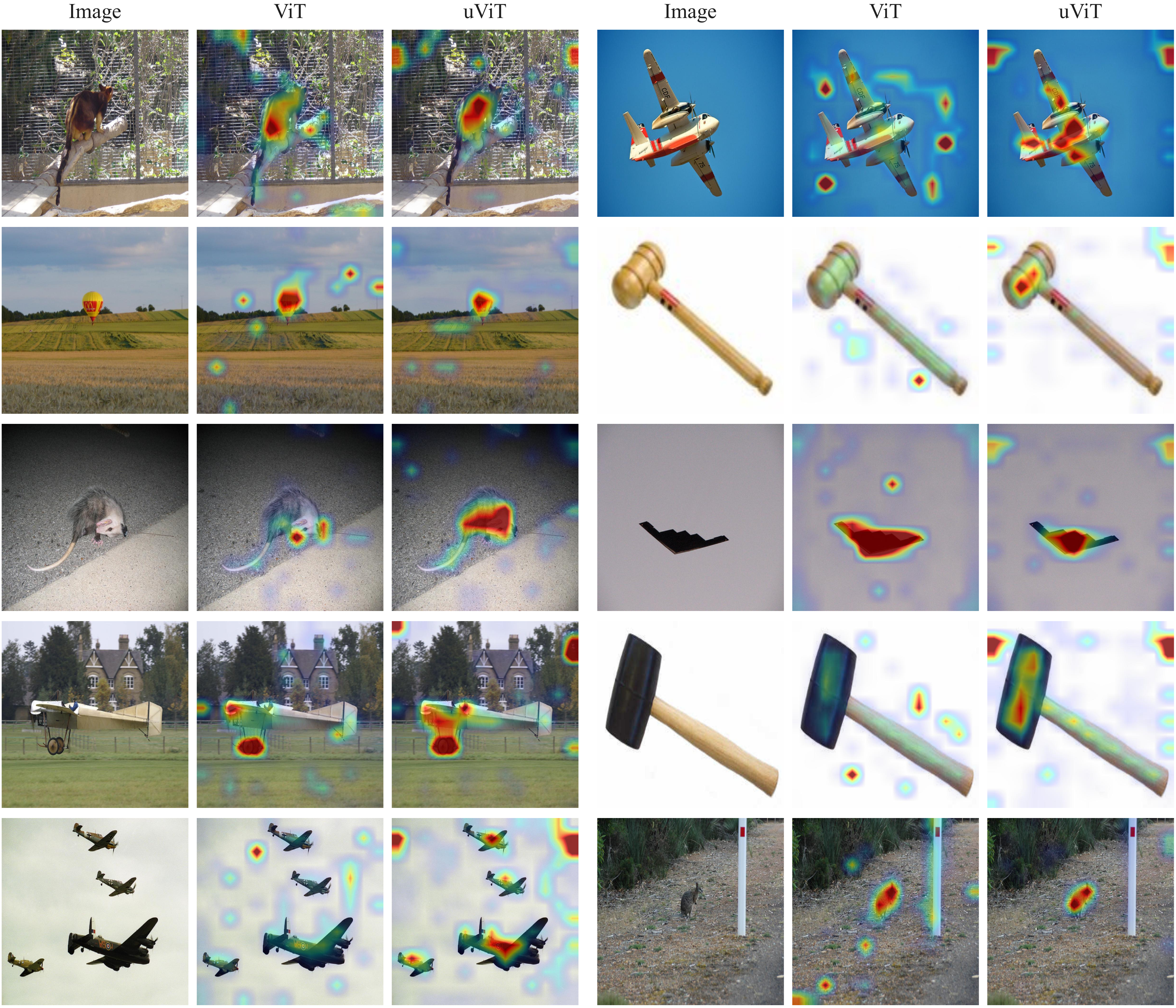}
  \caption{\textbf{Qualitative comparison of attention maps between ViT and uViT.} uViT-B reliably highlights the same salient object regions as ViT-B, supporting the attention-like behavior of the AttenFeed module.}
  \label{fig:attn_qualitative_suppl}
\end{figure*}

\clearpage
\subsection{Parametric Space Comparison of uViT and ViT}
\label{appendix:weight_comparison_suppl}

\cref{exp:weight_comparison} in the main text shows that, for the Base-scale models, the values of {\small $\mathbf{W}_V$} and {\small $\mathbf{W}_O$} in the AttenFeed module lie in an intermediate regime between the corresponding parameters of standard attention ({\small $\mathbf{W}_V$}, {\small $\mathbf{W}_O$}) and those of the FFN ({\small $\mathbf{W}_\text{in}$}, {\small $\mathbf{W}_\text{out}$}) in ViT. Here, we examine whether this transitional behavior also holds at the Small and Large scales. The layer selection, plotting procedure, and normalization are kept identical to the main-text protocol; the only difference is the model scale. The results are shown in the two figures below, corresponding to the Small and Large configurations, respectively.

At both scales, the uViT weight distributions are systematically wider than those of ViT's attention parameters ({\small $\mathbf{W}_V$}, {\small $\mathbf{W}_O$}) yet narrower than those of ViT's FFN parameters ({\small $\mathbf{W}_\text{in}$}, {\small $\mathbf{W}_\text{out}$}), reproducing the intermediate-regime pattern observed for the Base model. The consistency of this pattern across Small, Base, and Large models indicates that the unified character of the AttenFeed module is a scale-agnostic property of the architecture rather than an artifact of a specific capacity. This provides additional empirical support for our claim that the AttenFeed module simultaneously inherits the parametric tendencies of both the attention and FFN components of a standard Transformer.

\begin{figure*}[h]
  \centering
  \includegraphics[width=0.98\textwidth]{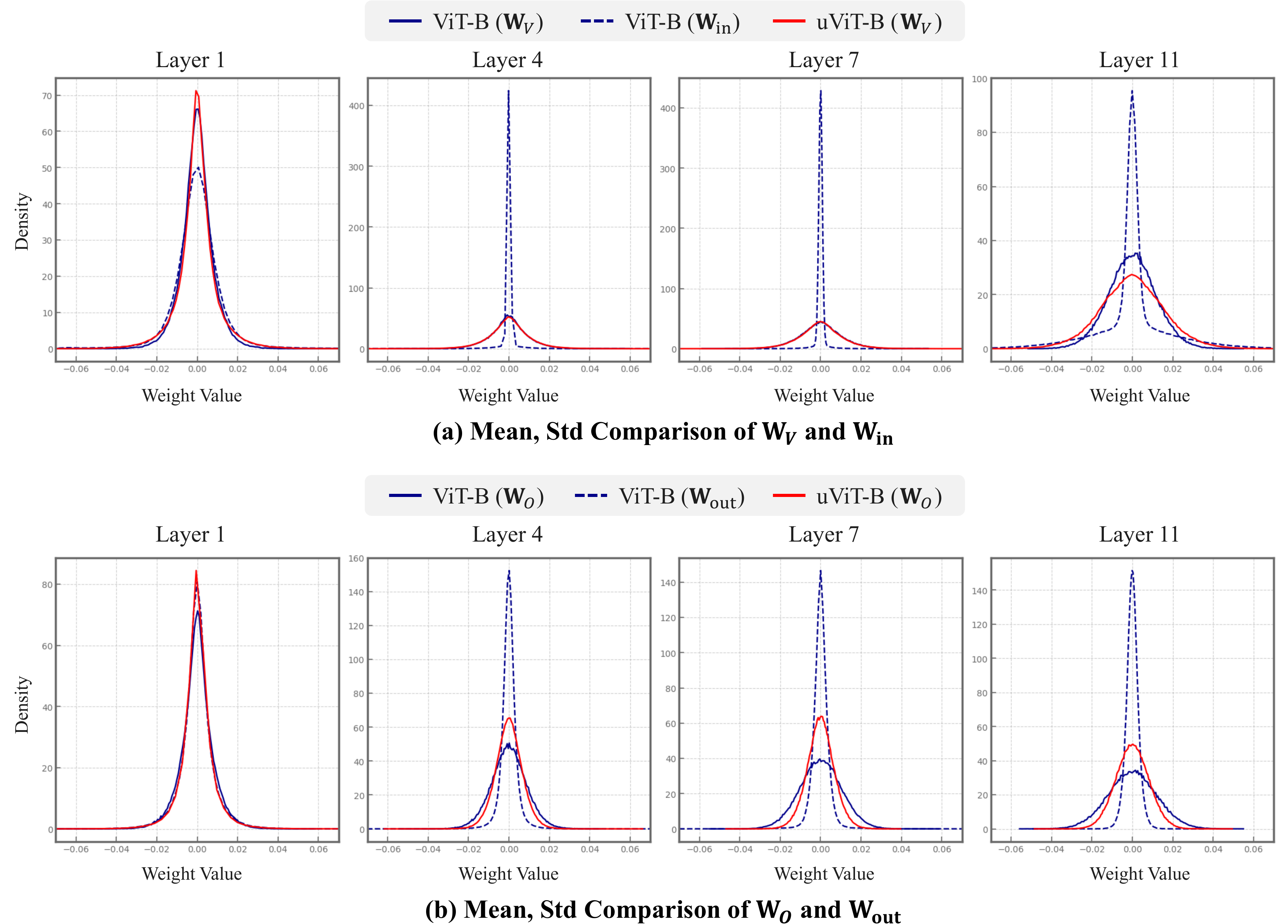}
  \caption{\textbf{Weight value distributions of uViT-S and ViT-S.} The results show that uViT parameters occupy a transitional space between standard attention and FFN regimes at the Small scale, providing empirical evidence for the unified nature of the AttenFeed module.}
  \label{fig:weight_comparison_suppl_S}
  \vspace{-5pt}
\end{figure*}

\begin{figure*}[h]
  \centering
  \includegraphics[width=0.98\textwidth]{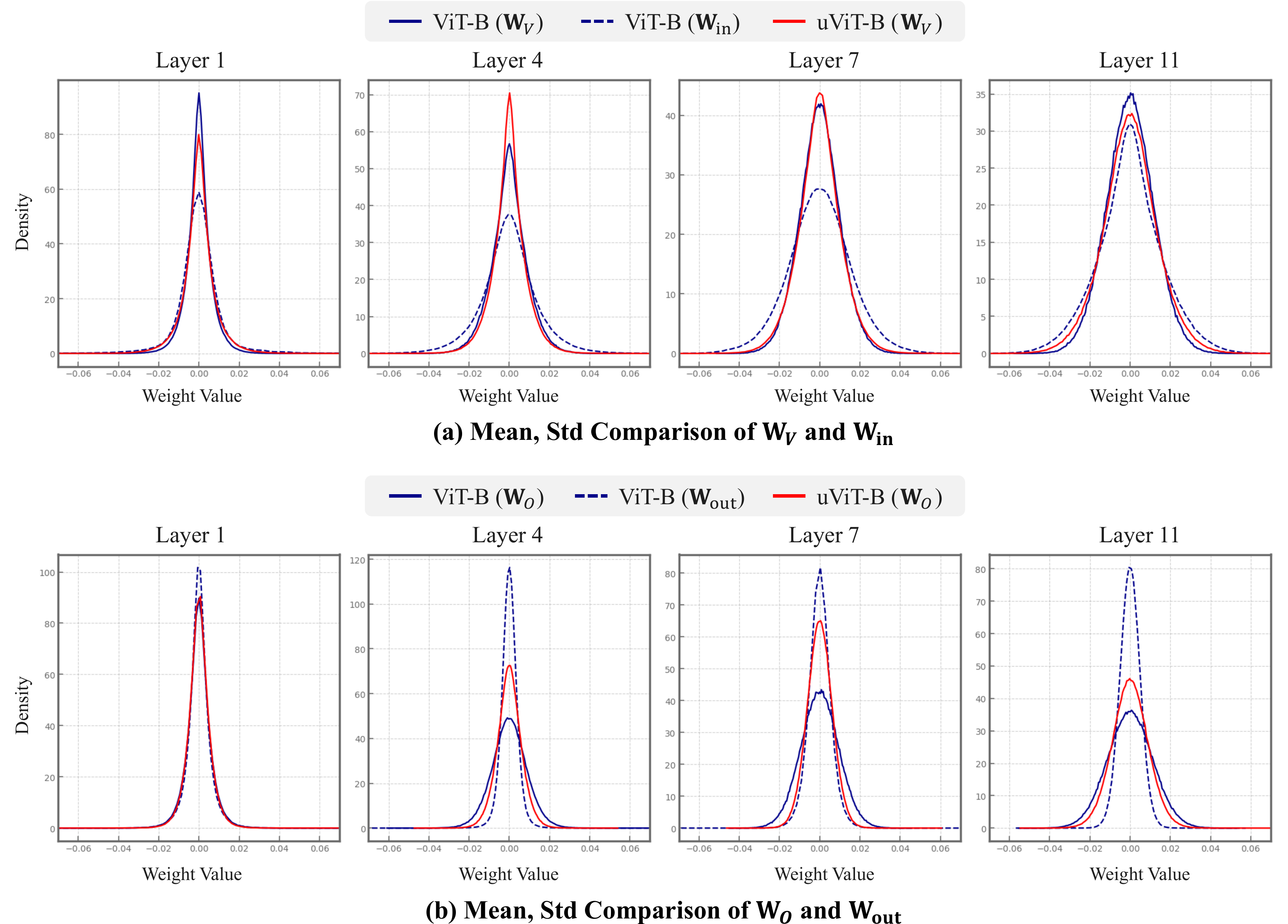}
  \caption{\textbf{Weight value distributions of uViT-L and ViT-L.} The same transitional pattern holds at the Large scale, indicating that the unified nature of the AttenFeed module is not scale-specific.}
  \label{fig:weight_comparison_suppl_L}
  \vspace{-5pt}
\end{figure*}

\clearpage
\section{Limitation}
\label{sec:limitation}

While this work provides theoretical and empirical insights into the Attention-FFN dichotomy through the proposed AttenFeed module and uViT architecture, we acknowledge a few limitations that warrant future investigation.

First, our empirical analysis is currently confined to the computer vision domain. Although the Transformer was originally designed for natural language processing and has since become a universal architecture across various modalities, our experiments strictly evaluate Vision Transformers (ViTs) on image datasets. The inductive biases imposed by the Attention-FFN separation may manifest differently in other domains, such as language modeling or audio processing. Validating the necessity of the Attention-FFN dichotomy and the efficacy of the AttenFeed module in these non-vision domains remains an important direction for future work.

Second, to ensure a strictly fair comparison between uViT and the standard ViT, all models in our experiments were trained under identical conditions using a standard ViT training recipe. However, since uViT fundamentally alters the conventional network structure by unifying Attention and FFN into a single sequence of AttenFeed modules, it may possess a distinct optimal optimization landscape. Developing a specialized training recipe tailored to uViT---such as exploring different regularization techniques, learning rate schedules, or data augmentations---could potentially unlock further performance gains and alter the scaling behaviors observed in our current findings.

\end{document}